\DeclareMathOperator{\argmax}{arg\,max}
\newtheorem{theorem}{Theorem}
\newtheorem{lemma}{Lemma}
\newtheorem{corollary}{Corollary}
\newtheorem{definition}{Definition}
\begin{document}
%
\title{Efficient Combinatorial Optimization for Word-level Adversarial Textual Attack}
%
%
%

\author{Shengcai~Liu,~\IEEEmembership{Member,~IEEE,}
        Ning~Lu,~\IEEEmembership{Student Member,~IEEE,}
        Cheng~Chen,~\IEEEmembership{Member,~IEEE,}
        Ke~Tang,~\IEEEmembership{Senior Member,~IEEE,}
\thanks{The authors are with the Research Institute of Trustworthy Autonomous Systems, Southern University of Science and Technology, Shenzhen 518055, China,
and the Guangdong Key Laboratory of Brain-Inspired Intelligent Computation, Department of Computer Science and Engineering, Southern University of Science and Technology, Shenzhen 518055, China (e-mail: liusc3@sustech.edu.cn, 11610310@mail.sustech.edu.cn, chenc3@sustech.edu.cn, tangk3@sustech.edu.cn).
Corresponding author: Cheng Chen.}}

%
%

\markboth{Journal of \LaTeX\ Class Files,~Vol.~14, No.~8, August~2015}%
{Shell \MakeLowercase{\textit{et al.}}: Bare Demo of IEEEtran.cls for IEEE Journals}
%



\maketitle

\begin{abstract}
  Over the past few years, various word-level textual attack approaches have been proposed to reveal the vulnerability of deep neural networks used in natural language processing.
  Typically, these approaches involve an important optimization step to determine which substitute to be used for each word in the original input.
  However, current research on this step is still rather limited, from the perspectives of both problem-understanding and problem-solving.
  In this paper, we address these issues by uncovering the theoretical properties of the problem and proposing an efficient local search algorithm (LS) to solve it.
  We establish the \textit{first} provable approximation guarantee on solving the problem in general cases.
  Extensive experiments involving 5 NLP tasks, 8 datasets and 26 NLP models show that LS can largely reduce the number of queries usually by an order of magnitude to achieve high attack success rates. 
  Further experiments show that the adversarial examples crafted by LS usually have higher quality, exhibit better transferability, and can bring more robustness improvement to victim models by adversarial training.
\end{abstract}
\begin{IEEEkeywords}
  textual attack, adversarial examples, word-level substitution, combinatorial optimization
\end{IEEEkeywords}

%
\IEEEpeerreviewmaketitle

\section{Introduction}
\IEEEPARstart{U}{nderstanding} the vulnerability of deep neural networks (DNNs) to \textit{adversarial examples} \cite{SzegedyZSBEGF14,GoodfellowSS15} has emerged as an important research area, due to the wide range of applications of DNNs.
Generally, adversarial examples are crafted by maliciously perturbing the original input, with the goal of fooling the target DNNs into producing undesirable behavior.
In image classification and speech recognition, extensive studies have been conducted for devising effective adversarial attacks \cite{GoodfellowSS15,CarliniMVZSSWZ16,DongLPS0HL18,TramerKPGBM18}, as well as further improving robustness and interpretability of DNNs \cite{WongK18}.

In the area of Natural Language Processing (NLP), motivated by the threats of adversarial examples in key applications such as spam filtering \cite{bhowmick2018mail} and malware detection \cite{McLaughlinRKYMS17}, there has been considerable attempt on addressing textual attacks for various NLP tasks \cite{wang2019towards}.
Unlike image attack, for textual attack it is difficult to exploit the gradient of the network with respect to input perturbation, due to the discrete nature of texts \cite{AlzantotSEHSC18}.
Moreover, in realistic settings (e.g., attacking web service such as Google Translate), the attacker usually has no access to the model weights or gradients, but can only submit a limited number of input queries and then receive the corresponding model predictions.
These facts give rise to the \textit{black-box} approaches that directly modify the original input to craft adversarial examples, without utilizing the internal information of the victim models.
Depending on where the modification is applied, these approaches can be further categorized into character-level manipulation \cite{HosseiniKZP17,BelinkovB18,GaoLSQ18}, word-level substitution \cite{AlzantotSEHSC18,RenDHC19,LiJDLW19,ZangQYLZLS20,JinJZS20,TanJKS20,LiMGXQ20,ZhaoDS18,Maheshwary2020}, and sentence-level paraphrasing \cite{IyyerWGZ18,SinghGR18}.
Among them the word-level attack has attracted the most research interest, due to its good performance on both attack efficiency and adversarial example quality \cite{PruthiDL19,Wangabs190906723}.

In general, word-level attack approaches adopt a two-step strategy to craft adversarial examples \cite{ZangQYLZLS20}.
At the first step, they construct a set of candidate substitutes for each word in the original input, thus obtaining a discrete search space.
As shown in Figure~\ref{fig:examples}, although different attack approaches have used different word substitution methods, they all have the same goal here --- making the potential adversarial examples in the search space valid, meanwhile maintaining good grammaticality and naturality.

\begin{figure}[tbp]
  \includegraphics[width=\columnwidth]{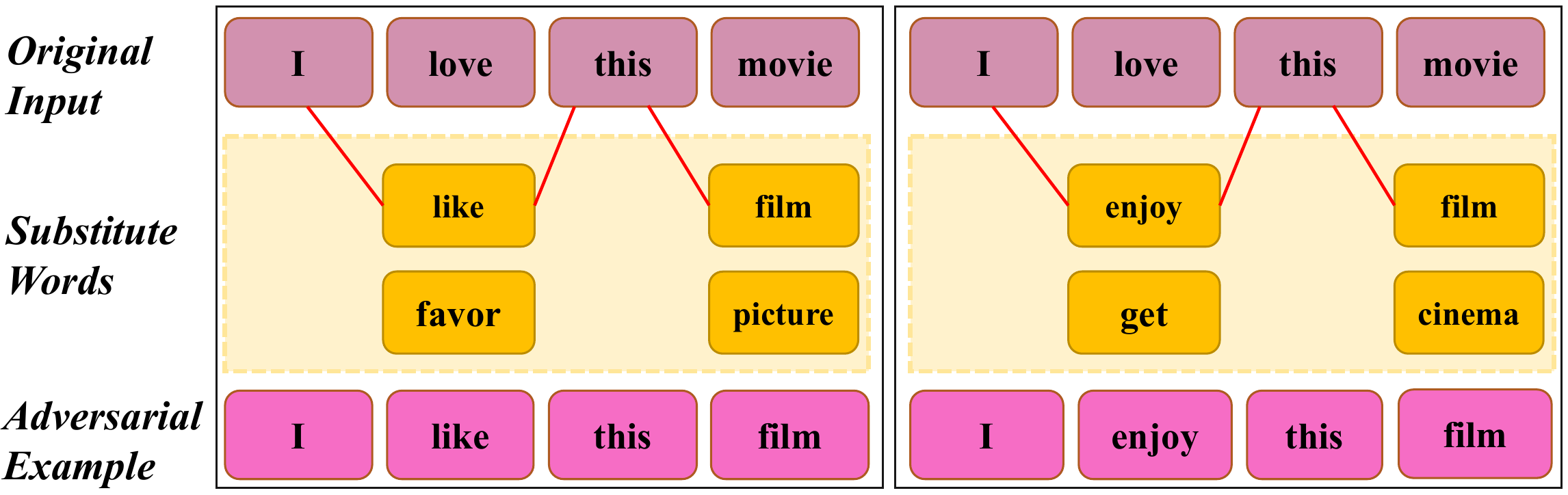}
  \caption{Simple examples showing sememe-based word substitution \cite{ZangQYLZLS20} (the left) and synonym-based word substitution \cite{RenDHC19} (the right), as well as the final crafted adversarial examples.}
  \label{fig:examples}
\end{figure}

At the second step, these approaches solve the induced \textit{combinatorial optimization} problem by determining for each word which substitute should be used, to finally craft an adversarial example.
It is conceivable that the optimization algorithm used at this step is crucial for the overall attack performance.
Previous studies have explored various options, including population-based search algorithms \cite{AlzantotSEHSC18,Wangabs190906723,ZangQYLZLS20,Maheshwary2020} and simple heuristics \cite{RenDHC19,TanJKS20,JinJZS20}.
However, there still exist some important issues that remain unsolved.
First, from the theoretical perspective, little is known about the hardness of the optimization problem considered here, nor the performance guarantee which we can achieve on it.
Second, from the practical perspective, the currently adopted optimization algorithms cannot achieve good trade-offs between attack success rates and query efficiency, considering that either low success rates or low query efficiency will make the attack approach less practical \cite{LiJDLW19,Wangabs190906723}.
Specifically, population-based algorithms need to perform excessive model queries to maintain a high attack success rate.
In contrast, simple heuristics perform much fewer queries, but they cannot achieve satisfactory success rates.
A further observation is that the optimization algorithm used in this step is actually independent of the word substitution method used in the first step.
This means a powerful optimization algorithm can be used in combination with all existing word substitution methods to achieve the best possible attack performance.
Moreover, it can also be used as a plug-and-play algorithm for the optimization module of future attack approaches.

Based on the above observations, in this work we focus on the optimization step of word-level textual attack and address the aforementioned two issues.
We first formulate the combinatorial optimization problem as the \textit{Set Maximization with Partition Matroid Constraints} which has been proven to be NP-Hard.
Then we present a simple yet powerful local search algorithm (dubbed LS) for solving it.
Intuitively, LS maximizes the marginal gain in each search step while considering multiple types of textual perturbations, which enables it to terminate quickly without performing excessive queries.
We prove an approximation bound of its performance, which is the \textit{first} performance guarantee on solving the considered problem in general cases.

We conduct large-scale experiments to evaluate LS.
Specifically, we consider five representative word-level textual attack approaches in the literature and compare LS with their original optimization algorithms.
The whole experiments involve 5 NLP tasks, 8 datasets and 26 NLP models.
The results demonstrate that LS consistently outperforms the baselines.
Compared with population-based algorithms, it dramatically reduces the number of queries usually by an order of magnitude, while achieving the same or even higher attack success rates.
Compared with simple heuristic algorithms, it obtains much higher success rates, with a reasonable number of additional queries.
Moreover, further experiments show that compared to the adversarial examples crafted by the baselines, the ones crafted by LS typically have higher quality and comparable validity,
exhibit better transferability, and can bring larger robustness improvement to victim models by adversarial training.

The rest of the paper is organized as follows.
Section~\ref{sec:related_work} presents a literature review on adversarial textual attack.
Section~\ref{sec:methods} first formally defines the induced combinatorial optimization problem and investigates its main characteristics, and then presents the local search algorithm, followed by its approximation bound.
Section~\ref{sec:exp} compares the proposed algorithm with the optimization algorithms of the state-of-the-art textual attack approaches in various attack scenarios.
Finally, Section~\ref{sec:conclusion} concludes the paper and discusses the potential future directions. 

\section{Related Work}
\label{sec:related_work}
As aforementioned, existing adversarial textual attacks can be roughly classified into sentence-level, word-level, and character-level attacks, depending on where the perturbation of the original input is applied.
Sentence-level attacks can be crafted through paraphrasing \cite{IyyerWGZ18,SinghGR18}, performing perturbations in the continuous latent semantic space \cite{ZhaoDS18}, and adding distracting sentences \cite{JiaL17}.
However, adversarial examples crafted by these approaches usually have significantly different forms from the original input and therefore it is difficult to maintain their validity.
Character-level attacks are often crafted by random character manipulation such as swap, substitution, deletion, insertion and repeating \cite{BelinkovB18,GaoLSQ18,HosseiniKZP17,LiJDLW19}.
In addition, there have also been attempts on exploiting the gradients of the victim model to guide the process of character substitution, with the help of one-hot character embeddings \cite{EbrahimiRLD18}.
Although character-level attacks can achieve high success rates, they often break the grammaticality and naturality of original input and there has been evidence in the literature that character-level attacks can be easily defended \cite{PruthiDL19}.

Compared to character-level and sentence-level attacks which tend to break either grammaticality or validity of the original input, word-level attacks usually perform well on both attack efficiency and adversarial example quality.
As aforementioned, generally a word-level attack approach can be decomposed into two steps: word substitution and optimization.
For the former, many approaches have used synonyms \cite{RenDHC19,Wangabs190906723,JinJZS20,Maheshwary2020}, infections \cite{TanJKS20} or the words with the same sememes \cite{ZangQYLZLS20}, as the candidate substitutes.
Another popular choice is to leverage language models \cite{LiMGXQ20} or word embeddings \cite{LiJDLW19,ChengYCZH20} to filter out inappropriate words.
The combination of the above two has also been explored \cite{AlzantotSEHSC18}.

As for the optimization step, existing studies have adopted population-based search algorithms including genetic algorithm \cite{AlzantotSEHSC18,Wangabs190906723,Maheshwary2020} and particle swarm optimization \cite{ZangQYLZLS20}.
Other approaches mainly use simple heuristics such as saliency/importance-based greedy substitution \cite{RenDHC19,LiJDLW19,JinJZS20,LiMGXQ20} which first sorts words according to their saliency/importance and then finds the best substitute for each word in turn,
and sequential greedy substitution \cite{TanJKS20} which sequentially finds the best substitute for each word in the original input.
However, for the optimization problem considered here, its theoretical properties are still unclear.
In addition, there is still much room for improvement regarding query efficiency and attack success rates.
It is worth mentioning there has been some well-designed toolboxes such as TextAttack \cite{MorrisLYGJQ20} that provides interfaces to easily construct textual attacks from combinations of novel and existing components.

Finally, textual attacks can also be categorized according to the accessibility to the victim model, i.e., \textit{white-box} and \textit{black-box} attacks.
White-box attacks require full knowledge of the victim model to perform gradient computation \cite{PapernotMSH16,SatoSS018,BinLiang2018,EbrahimiRLD18,WallaceFKGS19}, which however is often unavailable in practice.
In contrast, black-box attacks \cite{AlzantotSEHSC18,ZangQYLZLS20,LiJDLW19,RenDHC19,TanJKS20,JinJZS20,ZhangZML19,EgerSRL0MSSG19,IyyerWGZ18,ZhaoDS18,SinghGR18,Maheshwary2020,LiMGXQ20} only require the output of the victim model.
The present work falls into this category and assumes the class probabilities or confidence scores of the victim model are available.

\section{Methods}
\label{sec:methods}

In the optimization step of word-level textual attack, we are given an original input $\mathbf{x}=w_1...w_i...w_n$, where $n$ is the word number and $w_i$ is the $i$-th word.
Let $B_i$ denote the set of candidate substitutes for $w_i$ (note $B_i$ can be empty).
To craft an adversarial example $\mathbf{x}_{adv}$, for each $w_i$ in $\mathbf{x}$, we can select at most one word from $B_i$ to replace it (if none is selected, $w_i$ remains unchanged).
For example, in the left part of Figure~\ref{fig:examples}, we select ``like'' from $B_2$ and ``film'' from $B_4$ to replace $w_2$ (``love'') and $w_4$ (``movie''), respectively.
The goal is then to find $\mathbf{x}_{adv}$ that maximizes the objective function $f(\mathbf{x}_{adv})$, with as few model queries as possible.
In the literature, the commonly considered $f(\mathbf{x}_{adv})$ is the predicted probability on a specific wrong class \cite{ZangQYLZLS20}, i.e., targeted attack, or one minus the predicted probability on the ground truth \cite{RenDHC19}, i.e., untargeted attack.

\subsection{Problem Formulation}
\label{sec:problem_formulation}
Formally, let $[n] \triangleq \{1,2,...,n\}$ and $V \triangleq \bigcup_{i \in [n]}B_i$.
We first notice that crafting $\mathbf{x}_{adv}$ is equivalent to selecting a subset $S \subseteq V$ s.t. $|S \cap B_i| \leq 1, \forall i \in [n]$ \footnote{This requires that $B_1,...,B_n$ are disjoint, which can be guaranteed by adding a position-aware prefix to each word in $B_1,...,B_n$, For example, if $B_2$ and $B_4$ have a common element (word) ``like'', then the one in $B_2$ is changed to ``2\_like'' while the other in $B_4$ is changed to ``4\_like'', such that they are distinguished.}.
For example, the adversarial example in the left part of Figure~\ref{fig:examples} corresponds to the subset $\{\text{``like''}, \text{``film''}\}$.
Henceforth, we use $\mathbf{x}_{adv}$ and its corresponding $S$ interchangeably.
Let $d_i$ be integers s.t. $1 \leq d_i \leq |B_i|, \forall i \in [n]$.
Equivalently, the considered problem is actually a special case of the following problem with $d_i=1, \forall i \in [n]$:
\begin{equation}
\label{eq:problem_definition}
  \max_{S \subseteq V}  f(S) \ \ \mathrm{s.t.}\ |S \cap B_i| \leq d_i, \forall i \in [n].
\end{equation}
In the literature, the problem in Eq.~(\ref{eq:problem_definition}) is dubbed the \textit{Set Maximization with Partition Matroid Constraints}, which has been proven to be NP-hard \cite{cornuejols1977exceptional} in general cases.
Using the naive exhaustive search to find the exact solution to it needs $\prod_{i \in [n]} \sum_{j=0}^{d_i} {|B_i|\choose j} = \mathcal{O}(2^{|V|})$ queries.
However, classical results \cite{NemhauserWF78} have shown that the conventional greedy algorithm can achieve $(1-1/e)$-approximation if $f$ is monotone submodular (see the definitions below).
Moreover, if $f$ is non-monotone submodular, the greedy algorithm achieves $\frac{1}{\alpha}(1-e^{-\alpha \bar{d}/d})$-approximation \cite{Friedrich2019}, where $d=\sum_{i\in [n]} d_i$, $\bar{d}=\max_{i\in [n]}d_i$ and $\alpha \geq 0$ is a parameter bounding the maximum rate with which $f$ changes.
 \begin{definition}
   \label{def:monotone}
   A set function $f:2^V \rightarrow \mathbb{R}$ is monotone if for any $X \subseteq Y \subseteq V$, $f(X) \leq f(Y)$.
 \end{definition}
\begin{definition}
  \label{def:submodular}
  A set function $f:2^V \rightarrow \mathbb{R}$ is submodular if for any $X \subseteq Y \subseteq V$ and any $e \in V \setminus Y $, $f(X \cup \{e\}) - f(X) \geq f(Y \cup \{e\}) - f(Y)$.
\end{definition}
Intuitively, submodular functions exhibit a \textit{diminishing returns} property that the marginal gain of adding an element diminishes as the set size increases.
Unfortunately, in the case of textual attack, $f$ can be non-submodular.
For example, in the right part of Figure~\ref{fig:examples}, suppose we consider targeted attack for sentiment analysis and $f$ is the predicted probability on the ``negative'' label.
Using BERT \cite{DevlinCLT19} trained on IMDB data \cite{MaasDPHNP11} as the victim model, for $X=\varnothing$, $Y=\{\text{``cinema''}\}$ and $e=\text{``get''}$, we obtain $f(X)=2.432\text{E-}{5}$, $f(X\cup \{e\})=2.724\text{E-}{5}$, $f(Y)=2.664\text{E-}{5}$ and $f(Y \cup \{e\})=3.141\text{E-}{5}$.
Then $f(X\cup \{e\}) - f(X) = 2.92\text{E-}{6} < f(Y\cup \{e\}) - f(Y)=4.77\text{E-}{6}$, which contradicts with Definition~\ref{def:submodular}.
On the other hand, even if $f$ is not strictly submodular, we can still maximize $f$ to a substantial extent by finding a local optimal solution to it.
Actually, by characterizing how close $f$ is to submodularity, we can strictly bound the gap between the local optimal solution and the optimum of $f$.
That is to say, no matter whether $f$ is submodular or not, our proposed local search algorithm always achieves a provable performance guarantee when maximizing it.
In below, we first present the local search algorithm, and then establish its performance guarantee.

\begin{algorithm}[tbp]
	\LinesNumbered
	\SetKwInOut{Input}{input}
	\SetKwInOut{Output}{inout}
	\Input{objective function $f:2^V \rightarrow \mathbb{R}$; disjoint subsets $B_1,...,B_n \subseteq V$; integers $d_1,...,d_n$ s.t. $1 \leq d_i \leq |B_i|$, $\forall i \in [n]$;}
	$S \leftarrow \varnothing$;\\
	\While{\textbf{true}}
	{
		\tcc{-----insertion-----}
		Let $e \in V \setminus S$ maximizing $f(S \cup \{e\})-f(S)$ s.t. $|(S\cup\{e\}) \cap B_i| \leq d_i, \forall i \in [n]$;\\
		$S_1 \leftarrow S \cup \{e\}$;\\
		\tcc{-----deletion------}
		Let $e \in S$ maximizing $f(S \setminus \{e\})-f(S)$;\\
		$S_2 \leftarrow S \setminus \{e\}$;\\
		\tcc{-----exchange------}
		Let $e \in S$ and $v \in V \setminus S$ maximizing $f(S \setminus \{e\} \cup \{v\}) - f(S)$ s.t. $|(S \setminus \{e\} \cup \{v\}) \cap B_i| \leq d_i, \forall i \in [n]$;\\
		$S_3 \leftarrow S \setminus \{e\} \cup \{v\}$;\\
		\tcc{----update $S$ ----}
		\uIf{$\max_{A \in \{S_1,S_2,S_3\}}f(A) > f(S)$}
		{
			$S \leftarrow \argmax_{A \in \{S_1,S_2,S_3\}}f(A)$;\\
		}
		\lElse{break}
	}
	\uIf{$|(V \setminus S) \cap B_i| \leq d_i, \forall i \in [n]$}
	{
		\Return{$\argmax_{A \in \{S,V \setminus S\}}f(A)$};\\
	}
	\lElse{\Return{$S$}}
	\caption{Local Search Algorithm}
	\label{alg:local_search}
\end{algorithm}

\subsection{The Local Search Algorithm}
\label{sec:alg}

The proposed local search algorithm (LS) is outlined in Algorithm~\ref{alg:local_search}.
In brief, LS is an iterative procedure that stats from an empty set (line 1), and at each step (lines 2-16) considers three types of one-item perturbations to the selected subset $S$ and chooses the one that improves the objective at most (lines 12-14).
Specifically, the considered perturbations include:
1) \textbf{inserting} one item from $V\setminus S$ to $S$ (lines 3-5);
2) \textbf{deleting} one item from $S$ (lines 6-8);
3) \textbf{exchanging} one item from $S$ with another item from $V \setminus S$ (lines 9-11).
Note that only valid perturbations that result in feasible $S$ are considered.
The algorithm terminates when no possible improvement can be found (line 15), and returns the better one between $S$ and its complement $V\setminus S$ (if it is feasible, see lines 17-19).

When using LS to solve the optimization problem in word-level textual attack, the considered three types of one-item perturbations to $S$ correspond exactly to three types of one-word changes to the adversarial example $\mathbf{x}_{adv}$.
To illustrate this, we take the left part of Figure 1 as an example, where the original input $\mathbf{x}=\text{``I love this movie''}$ and $V=\{\text{``like'',``favor'',``film'',``picture''}\}$.
Suppose the current selected subset $S=\{\text{``like''}\}$; therefore the corresponding adversarial example $\mathbf{x}_{adv}=\text{``I like this movie''}$.
Then the three types of one-item perturbations to $S$ and the corresponding one-word changes to $\mathbf{x}_{adv}$ are as follows:
\begin{enumerate}
	\item \textbf{insertion}: inserting ``film'' or ``picture'' from $V \setminus S$ into $S$, which corresponds to substituting the word ``movie'' in $\mathbf{x}_{adv}$ with ``film'' or ``picture'';
	\item \textbf{deletion}: deleting ``like'' from $S$, which corresponds to substituting the word ``like'' in $\mathbf{x}_{adv}$ with the original word ``love'';
	\item \textbf{exchange}: exchanging ``like'' in $S$ with ``favor'' from $V \setminus S$, which corresponds to substituting the word ``like'' in $\mathbf{x}_{adv}$  with ``favor''.
\end{enumerate}
All these one-word changes to $\mathbf{x}_{adv}$ will be evaluated and the best one among them is selected to update $\mathbf{x}_{adv}$ (lines 13-15).

The total number of queries consumed by LS is also bounded.
First, LS will perform at most $\mathcal{O}(|V|)$ queries at one step (lines 3-11).
Besides, we can slightly modify the inequality condition in line 13 by adding a small positive constant $\epsilon$ to its right-hand side, to ensure that at every step the objective improvement is no less than $\epsilon$.
Since the value range of the objective function $f$ considered in textual attack is finite (typically very small), e.g., if $f$ is the predicted probability then its value range is 1, the algorithm will perform at most $\mathcal{O}(1 / \epsilon)$ steps.
This means the total query number is $\mathcal{O}(|V| / \epsilon)$.

It is worth mentioning that if only the insertion perturbation is used, then LS degenerates to the conventional greedy algorithm \cite{NemhauserWF78}.
We have also tested the greedy algorithm in the experiments (see Appendix~B), and find it consistently performs worse than LS in terms of attack success rates, which indicates the necessity of considering deletion and exchange perturbations.
Below we establish the \textit{first} performance guarantee on solving the problem in Eq.~(\ref{eq:problem_definition}) in general cases.

\subsection{Approximation Bound}
In a nutshell, we use the submodularity index \cite{ZhouS16} to characterize how close $f$ is to
submodularity, and establish an approximation bound that holds for any local optimum (such as
the solution $S$ found by LS) on $f$.
Note we assume $f$ is non-negative, which is true in word-level textual attack, and we set $f(S)$ = 0 for any infeasible solution $S$.
We first give the formal definition of local optimum.

\begin{definition}
  \label{def:local_optimum}
  Given a set function $f:2^V \rightarrow \mathbb{R}$, $S$ is a local optimum, if
  $f(S) \geq f(S\cup \{e\})$ for any $e \in V\setminus S$,
  $f(S) \geq f(S\setminus \{e\})$ for any $e \in S$,
  and $f(S) \geq f(S \setminus \{e\} \cup \{v\})$ for any $e \in S$ and $v \in V \setminus S$.
\end{definition}

The three conditions (insertion, deletion and exchange) for being a local optimum in Definition~\ref{def:local_optimum} correspond exactly to the three types of perturbations in LS.
We then have the following lemma.

\begin{lemma}
  \label{lem:local_optimum}
  Let $S$ be the solution obtained by running LS, then $S$ is a local optimum on $f$ defined in Eq.~(\ref{eq:problem_definition}).
\end{lemma}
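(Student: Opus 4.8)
The plan is to show that LS must terminate, and that the working set $S$ at the instant the \textbf{while} loop breaks satisfies, one by one, the three inequalities in Definition~\ref{def:local_optimum}. For termination, note that $S$ is reassigned only in line~13, and only when $\max_{A\in\{S_1,S_2,S_3\}} f(A) > f(S)$, i.e.\ only when the value $f(S)$ strictly increases. Since $V$ is finite, $2^V$ is finite and $f$ takes only finitely many values, so $f(S)$ can strictly increase only finitely often; hence the loop exits after finitely many iterations. (The $\epsilon$-modification of Section~\ref{sec:alg} is needed only to bound the \emph{number} of queries, not for termination.) I would also record the invariant that $S$ stays feasible, i.e.\ $|S\cap B_i|\le d_i$ for all $i\in[n]$: this holds for $\varnothing$ and is preserved by lines 5, 8, 11 (deletions trivially, insertions and exchanges by the explicit constraints), hence by the update in line~14.

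Next I would simply read off the three conditions from the exit condition, namely $f(S)\ge f(S_1)$, $f(S)\ge f(S_2)$ and $f(S)\ge f(S_3)$. For insertion: line~3 chooses $e^\star\in V\setminus S$ attaining the maximum of $f(S\cup\{e\})-f(S)$ over all \emph{feasible} insertions $e$, and sets $S_1=S\cup\{e^\star\}$; from $f(S_1)\le f(S)$ we get $f(S\cup\{e\})\le f(S)$ for every feasible insertion $e$, while for an infeasible insertion $f(S\cup\{e\})=0\le f(S)$ by non-negativity of $f$ and the convention that $f$ vanishes on infeasible sets. Together these give $f(S)\ge f(S\cup\{e\})$ for all $e\in V\setminus S$. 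The deletion condition is identical but simpler, as $S\setminus\{e\}$ is always feasible: line~6 maximizes $f(S\setminus\{e\})-f(S)$ over $e\in S$, so $f(S_2)\le f(S)$ yields $f(S)\ge f(S\setminus\{e\})$ for all $e\in S$. The exchange condition is handled as in the insertion case: line~9 maximizes $f(S\setminus\{e\}\cup\{v\})-f(S)$ over feasible pairs, $f(S_3)\le f(S)$ covers those, and infeasible pairs are covered by the zero-on-infeasible convention. All three conditions of Definition~\ref{def:local_optimum} thus hold, so $S$ is a local optimum; and since the algorithm finally returns $S$ or (when feasible) $V\setminus S$, whichever has the larger $f$-value, the returned solution is at least as good as this local optimum, which is all that the subsequent approximation bound needs.

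I do not expect a genuine obstacle here; the two care-points are bookkeeping. First, Definition~\ref{def:local_optimum} quantifies over \emph{all} single-element insertions and exchanges, whereas lines 3 and 9 optimize only over the \emph{feasible} ones — this gap is exactly where non-negativity of $f$ and the $f(\text{infeasible})=0$ convention, together with the feasibility of $S$ itself (the invariant above), are invoked. Second, the degenerate cases where no feasible insertion or exchange exists, or where $S=\varnothing$ so that no deletion is possible, are dispatched by the obvious conventions (take the corresponding $S_j$ to be $S$, which then cannot trigger an update and vacuously imposes no extra condition). Neither point involves any computation.
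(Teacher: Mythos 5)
Your proof is correct and takes essentially the same approach as the paper's, which argues the contrapositive in one line: if $S$ violated any of the three conditions, LS could not have terminated with $S$. Your version is more careful on two points the paper glosses over — termination itself, and the gap between Definition~\ref{def:local_optimum} quantifying over \emph{all} insertions/exchanges while lines 3 and 9 only consider \emph{feasible} ones — and your resolution via non-negativity and the $f(\text{infeasible})=0$ convention is exactly consistent with the paper's stated assumptions.
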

\begin{proof}
	Suppose $S$ is not a local optimum, then there exists an element $e$ satisfying one of the following three conditions:
	1) $e \in V \setminus S$ and $f(S) < f(S\cup \{e\})$;
	2) $e \in S$ and $f(S) < f(S\setminus \{e\})$;
	3) $e \in S$ and there exists $v \in V \setminus S$, $f(S) < f(S \setminus \{e\} \cup \{v\})$.
	This implies LS must not terminate with $S$.
	Contradiction.
\end{proof}

Note that Lemma~\ref{lem:local_optimum} holds regardless of $f$ being submodular.
The following theorem presents an approximation bound achieved by any local optimum on $f$ in Eq.~(\ref{eq:problem_definition}), which also holds for the solution found by LS due to Lemma~\ref{lem:local_optimum}.

\begin{theorem}
  \label{the:bound}
  Let $C$ be an optimal solution for the problem defined in Eq.~(\ref{eq:problem_definition}) and $S$ be a local optimum.
  Then we have:
  \begin{align*}
  	2f(S)+f(V\setminus S) & \geq  f(C) + \max \{ \xi \lambda_{f}(V,2),\\ &\delta \lambda_{f}(V,2)+ \lambda_{f}(S,|C \setminus S|) \},
  \end{align*}
  where $\lambda_{f}(\cdot,\cdot)$ is the submodularity index \cite{ZhouS16},
  $\xi={ |S \setminus C| \choose 2 } + {|C \setminus S| \choose 2} + |\overline{S \cup C}| \cdot |S| + |C \setminus S| \cdot |S \cap C|$
  and
  $\delta=\xi- {|C \setminus S| \choose 2} + |C \setminus S|$.
\end{theorem}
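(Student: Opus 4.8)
The plan is to combine the three local-optimality conditions of Definition~\ref{def:local_optimum} with the exchange structure of a partition matroid and with two near-submodularity estimates furnished by the submodularity index, and then add two resulting inequalities. Abbreviate $\lambda_2\triangleq\lambda_f(V,2)$. From the definition of $\lambda_f$ one has, for any base $T$: a \emph{pairwise bound} $f(T\cup\{a\})+f(T\cup\{b\})-f(T\cup\{a,b\})-f(T)\ge\lambda_2$ for all $a,b\notin T$, and a \emph{set bound} $f(T\cup A)-f(T)\le\sum_{a\in A}\bigl(f(T\cup\{a\})-f(T)\bigr)-\lambda_f(T,|A|)$ for all $A\subseteq V\setminus T$. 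Telescoping the pairwise bound while adding (or removing) the elements of a set to (from) a base one at a time yields the submodular inequality with a correction, $f(X)+f(Y)\ge f(X\cup Y)+f(X\cap Y)+|X\setminus Y|\cdot|Y\setminus X|\cdot\lambda_2$ for all $X,Y\subseteq V$, and a pairwise form of the set bound carrying $\binom{|A|}{2}\lambda_2$ instead of $\lambda_f(T,|A|)$. Having these two forms of the set bound available for $A=C\setminus S$ --- one with $\binom{|C\setminus S|}{2}\lambda_2$, one with $\lambda_f(S,|C\setminus S|)$ --- is precisely what produces the maximum in the statement.

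Next, the matroid step. Using only $|S\cap B_i|\le d_i$ and $|C\cap B_i|\le d_i$ for all $i$, I would partition $C\setminus S=C_{\mathrm{ins}}\cup C_{\mathrm{exc}}$ and construct an injection $\phi\colon C_{\mathrm{exc}}\to S\setminus C$ with $v$ and $\phi(v)$ in a common block, so that $S\cup\{v\}$ is feasible for every $v\in C_{\mathrm{ins}}$ and $S\setminus\{\phi(v)\}\cup\{v\}$ is feasible for every $v\in C_{\mathrm{exc}}$; within a block the count works because a block already $d_i$-full in $S$ must contain at least as many elements of $S\setminus C$ as of $C\setminus S$. Feeding these feasible perturbations into Definition~\ref{def:local_optimum} gives $f(S\cup\{v\})\le f(S)$ for $v\in C_{\mathrm{ins}}$, $f(S\setminus\{\phi(v)\}\cup\{v\})\le f(S)$ for $v\in C_{\mathrm{exc}}$, and $f(S)-f(S\setminus\{e\})\ge0$ for all $e\in S$.

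The assembly proceeds through two inequalities. (I) A bound on the ``mixing'' values: estimate $f(S\cap C)\le f(S)-\binom{|S\setminus C|}{2}\lambda_2$ by deleting $S\setminus C$ from $S$ one element at a time (deletion local optimality keeps the marginals nonnegative, and pulling bases back to $S$ via the pairwise bound costs $\binom{|S\setminus C|}{2}\lambda_2$), and estimate $f(S\cup C)\le f(S)-\binom{|C\setminus S|}{2}\lambda_2$ (or the $\lambda_f(S,|C\setminus S|)$ variant) by adding $C\setminus S$ to $S$ one element at a time, replacing each base-$S$ marginal by $0$ for $v\in C_{\mathrm{ins}}$ and, for $v\in C_{\mathrm{exc}}$, by rewriting it through the feasible swap $S\setminus\{\phi(v)\}\cup\{v\}$; this rewriting produces exactly the nonnegative deletion marginals $f(S)-f(S\setminus\{\phi(v)\})$, which, $\phi$ being injective into $S\setminus C$ and $f$ being nonnegative, are absorbed by those already kept in the $f(S\cap C)$ estimate. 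Adding the two estimates gives $f(S\cup C)+f(S\cap C)\le 2f(S)-\bigl(\binom{|S\setminus C|}{2}+\binom{|C\setminus S|}{2}\bigr)\lambda_2$ and its variant. (II) A bound on $f(C)$: apply the corrected submodular inequality to $(X,Y)=(V\setminus S,\,S\cup C)$, whose symmetric-difference pieces are $\overline{S\cup C}$ and $S$, and to $(X,Y)=(C\setminus S,\,S\cap C)$, whose pieces are $C\setminus S$ and $S\cap C$, and discard $f(V),f(\varnothing)\ge0$; this yields $f(C)\le f(V\setminus S)+f(S\cup C)+f(S\cap C)-\bigl(|\overline{S\cup C}|\cdot|S|+|C\setminus S|\cdot|S\cap C|\bigr)\lambda_2$. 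Substituting (I) into (II) eliminates $f(S\cup C)$ and $f(S\cap C)$, leaves $2f(S)+f(V\setminus S)$ on the right, and collects exactly the four counts $\binom{|S\setminus C|}{2}$, $\binom{|C\setminus S|}{2}$, $|\overline{S\cup C}|\cdot|S|$, $|C\setminus S|\cdot|S\cap C|$ that form $\xi$ (respectively $\delta$, using the $\lambda_f(S,|C\setminus S|)$ variant); rearranging gives the theorem.

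The step I expect to be the main obstacle is the coordinated estimate (I) of $f(S\cup C)$ and $f(S\cap C)$: the insertion, exchange, and deletion conditions must be telescoped together so that the surplus deletion marginals generated when rewriting the exchange conditions cancel against those appearing in the $f(S\cap C)$ estimate (injectivity of $\phi$ and nonnegativity of $f$ do the work here), and every invocation of the local submodularity defect must be counted with no slack so as to land on exactly $\xi$ and $\delta$ rather than something larger. One should also check that the possibly infeasible auxiliary sets $S\cup C$ and $V\setminus S$, together with the convention $f\equiv0$ on infeasible sets, do not spoil the inequalities; they do not, because every estimate above is used in the direction where assigning an infeasible set value $0$ is harmless.
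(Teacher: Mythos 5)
Your proposal follows essentially the same route as the paper's proof: your two ``inner'' estimates (insertion+deletion telescoping giving the $\binom{|C\setminus S|}{2}\lambda_f$ term, and deletion+exchange combined via the partition-matroid exchange injection giving the $\lambda_f(S,|C\setminus S|)$ term) are exactly the paper's Lemmas~\ref{lem:insert_delete} and \ref{lem:deletion_exchange}, and your step (II) is the paper's two applications of the SmI-corrected submodular inequality to the pairs $(V\setminus S,\,S\cup C)$ and $(C\setminus S,\,S\cap C)$. The assembly, the cancellation of the surplus deletion marginals via injectivity of the exchange map, and the resulting constants $\xi$ and $\delta$ all match the paper's argument.
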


The proof is given in the next section.
Since the previous bound \cite{Friedrich2019} for the problem defined in Eq.~(\ref{eq:problem_definition}) only holds in submodularity case, to compare our bound with it, we further refine our bound in this case.
If $f$ is submodular, the submodularity index $\lambda_f$ is strictly non-negative \cite{ZhouS16}; then we have $2f(S)+f(V \setminus S) \geq f(C)$, which immediately implies the following result.

\begin{corollary}
\label{cor:approximation_bound}
	If $f$ is submodular, then $f(S)\geq \frac{1}{3}f(C)$ or $f(V\setminus S)\geq \frac{1}{3}f(C)$.
\end{corollary}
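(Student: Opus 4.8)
The plan is to derive the bound by combining the three local-optimality inequalities of Definition~\ref{def:local_optimum} with a small number of submodularity-index estimates, adapting to the partition-matroid setting the ``complement'' argument familiar from unconstrained non-monotone submodular maximization; at the end, specializing all $\lambda_f$ terms to be non-negative will recover Corollary~\ref{cor:approximation_bound}.

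First I would record the two consequences of the submodularity index that do the work. The \emph{pairwise defect}: for any $X,Y\subseteq V$,
\[
f(X)+f(Y)\ \ge\ f(X\cup Y)+f(X\cap Y)+|X\setminus Y|\cdot|Y\setminus X|\cdot\lambda_{f}(V,2),
\]
obtained by telescoping $|X\setminus Y|\cdot|Y\setminus X|$ applications of the two-element inequality $f(A\cup\{u\})+f(A\cup\{v\})\ge f(A)+f(A\cup\{u,v\})+\lambda_{f}(V,2)$, which is just the definition of $\lambda_{f}(V,2)$. The \emph{aggregation defect}: for $T\subseteq V\setminus S$, $\sum_{e\in T}\bigl(f(S\cup\{e\})-f(S)\bigr)\ge f(S\cup T)-f(S)+\beta$, where $\beta=\binom{|T|}{2}\lambda_{f}(V,2)$ if one telescopes pairwise and $\beta=\lambda_{f}(S,|T|)$ if one instead applies the definition of $\lambda_{f}$ in a single shot with location set $S$; the symmetric estimate $\sum_{e\in T}\bigl(f(S\setminus\{e\})-f(S)\bigr)\ge f(S\setminus T)-f(S)+\binom{|T|}{2}\lambda_{f}(V,2)$ handles deletions when $T\subseteq S$.

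Next, the core step. Because the constraint is a partition matroid, inside each block $B_i$ one has $|(S\setminus C)\cap B_i|\ge|(C\setminus S)\cap B_i|$ whenever some $x\in(C\setminus S)\cap B_i$ cannot be inserted into $S$, so there is an injection $\pi$ from the non-insertable part of $C\setminus S$ into $S\setminus C$ with $x$ and $\pi(x)$ in the same block; extend $\pi$ to a partial map on all of $C\setminus S$ (insertable $x$ left unmatched) and let $D\subseteq S\setminus C$ be the elements outside its image. Applying Definition~\ref{def:local_optimum} to $S\mapsto S\cup\{x\}$ for unmatched $x$, to $S\mapsto S\setminus\{\pi(x)\}\cup\{x\}$ for matched $x$, and to $S\mapsto S\setminus\{y\}$ for $y\in D$, every perturbed value is $\le f(S)$; summing these inequalities and using the pairwise estimate $f(S\setminus\{\pi(x)\}\cup\{x\})\ge f(S\cup\{x\})+f(S\setminus\{\pi(x)\})-f(S)+\lambda_{f}(V,2)$ lets the insertion terms aggregate over all of $C\setminus S$ and the removal terms over all of $S\setminus C$, so after cancellation
\[
2f(S)\ \ge\ f(S\cup C)+f(S\cap C)+\rho,
\]
where $\rho$ collects the aggregation defects ($\binom{|C\setminus S|}{2}\lambda_{f}(V,2)$ or $\lambda_{f}(S,|C\setminus S|)$ for the insertions, $\binom{|S\setminus C|}{2}\lambda_{f}(V,2)$ for the removals) together with one $\lambda_{f}(V,2)$ per exchange. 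Independently, two uses of the pairwise defect with $(X,Y)=(S\cup C,\,V\setminus S)$ and $(X,Y)=(S\cap C,\,C\setminus S)$ give $f(S\cup C)+f(V\setminus S)\ge f(V)+f(C\setminus S)+|S|\cdot|\overline{S\cup C}|\cdot\lambda_{f}(V,2)$ and $f(S\cap C)+f(C\setminus S)\ge f(C)+f(\varnothing)+|S\cap C|\cdot|C\setminus S|\cdot\lambda_{f}(V,2)$; adding them, cancelling $f(C\setminus S)$, and dropping the non-negative $f(V),f(\varnothing)$ yields $f(S\cup C)+f(S\cap C)+f(V\setminus S)\ge f(C)+\bigl(|S|\cdot|\overline{S\cup C}|+|S\cap C|\cdot|C\setminus S|\bigr)\lambda_{f}(V,2)$. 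Adding this to the inequality for $2f(S)$ and cancelling $f(S\cup C),f(S\cap C)$ gives $2f(S)+f(V\setminus S)\ge f(C)+\rho+\bigl(|S|\cdot|\overline{S\cup C}|+|S\cap C|\cdot|C\setminus S|\bigr)\lambda_{f}(V,2)$; expanding $\rho$ with the pairwise aggregation of $C\setminus S$ produces the coefficient $\xi$, while expanding it with the one-shot $\lambda_{f}(S,|C\setminus S|)$ aggregation together with the bound $|C\setminus S|$ on the number of exchanges produces $\delta=\xi-\binom{|C\setminus S|}{2}+|C\setminus S|$, and keeping the stronger of the two bounds gives the stated inequality. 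If $f$ is submodular every $\lambda_{f}$ term is non-negative, the $\max\{\cdots\}$ can be dropped, $2f(S)+f(V\setminus S)\ge f(C)$, and Corollary~\ref{cor:approximation_bound} follows at once.

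The conceptual steps above are routine; the real difficulty is the exact coefficient bookkeeping --- counting precisely how many elementary two-element submodularity-index operations each telescoping consumes so that the accumulated defect equals \emph{exactly} $\xi\lambda_{f}(V,2)$, respectively $\delta\lambda_{f}(V,2)+\lambda_{f}(S,|C\setminus S|)$, and arranging the weighted sum of local-optimality inequalities so that every $f(\cdot)$ value other than $f(S)$, $f(V\setminus S)$ and $f(C)$ cancels. One must also control the sign of $\lambda_{f}(V,2)$ (this is exactly why both bounds appear and the $\max$ is taken, and why the generic pairwise steps are forced to use $\lambda_{f}(V,2)$ while only the single-shot aggregation over $C\setminus S$ may be based at the location $S$), and dispose of the degenerate cases in which $C\setminus S$, $S\setminus C$ or $\overline{S\cup C}$ is empty.
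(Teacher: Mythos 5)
Your argument is correct and follows essentially the same route as the paper: you re-derive Theorem~\ref{the:bound} using the same ingredients (the matroid exchange property, the one-shot SmI aggregation over $C\setminus S$, and the two pairwise-defect inequalities applied to $(S\cup C,\,V\setminus S)$ and $(S\cap C,\,C\setminus S)$), whereas the paper simply invokes Theorem~\ref{the:bound} as already proved. The final specialization --- non-negativity of $\lambda_f$ under submodularity gives $2f(S)+f(V\setminus S)\ge f(C)$, hence $\max\{f(S),f(V\setminus S)\}\ge\frac{1}{3}f(C)$ --- is exactly the paper's one-line proof of the corollary.
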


Corollary~\ref{cor:approximation_bound} indicates that in submodularity case LS can achieve $\frac{1}{3}$-approximation on $f$.
Further, we can obtain an even better bound based on a mild assumption.
The assumption is that there exists a $B_i$ with $|B_i|>2$.
If this is true, then the solution $V \setminus S$ must be infeasible, i.e., $f(V \setminus S)=0$; therefore we have $2f(S) \geq f(C)$, which indicates that LS can achieve $\frac{1}{2}$-approximation on $f$, as stated in Corollary~\ref{cor:approximation_bound_2}.

\begin{corollary}
	\label{cor:approximation_bound_2}
	If $f$ is submodular and $\exists i \in [n]$ such that $|B_i|>2$, then $f(S)\geq \frac{1}{2}f(C)$.
\end{corollary}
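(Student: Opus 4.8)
The plan is to obtain the claim from Theorem~\ref{the:bound} in two moves. First, weaken that bound to the submodular‑case inequality $2f(S)+f(V\setminus S)\ge f(C)$ --- exactly the step already used to derive Corollary~\ref{cor:approximation_bound}. Second, eliminate the term $f(V\setminus S)$ by showing that, under the extra hypothesis $|B_i|>2$, the complement $V\setminus S$ can never be a feasible solution, so the convention that $f$ vanishes on infeasible sets forces $f(V\setminus S)=0$ and the factor $3$ improves to $2$.

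For the first move I would invoke the fact that when $f$ is submodular the submodularity index is non‑negative, so $\lambda_f(V,2)\ge 0$ and $\lambda_f(S,|C\setminus S|)\ge 0$; hence the maximum on the right‑hand side of Theorem~\ref{the:bound} is at least $0$, giving $2f(S)+f(V\setminus S)\ge f(C)$. For the second move, fix $i\in[n]$ with $|B_i|>2$. In the instance of Eq.~(\ref{eq:problem_definition}) that arises from word‑level attack we have $d_i=1$, and feasibility of the local optimum $S$ gives $|S\cap B_i|\le d_i=1$; therefore $|(V\setminus S)\cap B_i| = |B_i|-|S\cap B_i| \ge |B_i|-1 \ge 2 > 1 = d_i$, so $V\setminus S$ violates the partition‑matroid constraint at index $i$. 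By the convention adopted at the start of this subsection, $f(V\setminus S)=0$. Substituting into the inequality of the first move yields $2f(S)\ge f(C)$, i.e. $f(S)\ge \tfrac12 f(C)$, which is the assertion.

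I do not expect a genuine obstacle: the statement is essentially an immediate consequence of Theorem~\ref{the:bound}. The only points that need care are (i) the non‑negativity of $\lambda_f$ in the submodular regime, which is precisely the property that also powers Corollary~\ref{cor:approximation_bound}, and (ii) the infeasibility argument, which relies on $d_i=1$; for the general form of Eq.~(\ref{eq:problem_definition}) the identical reasoning goes through once the hypothesis is strengthened to $|B_i|>2d_i$, since then $|(V\setminus S)\cap B_i|\ge|B_i|-d_i>d_i$. Finally, the standing assumption that $f$ is non‑negative (so $f(C)\ge 0$) is what makes $\tfrac12 f(C)$ a meaningful lower bound rather than a vacuous one.
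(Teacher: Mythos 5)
Your proposal is correct and follows essentially the same route as the paper: specialize Theorem~\ref{the:bound} using the non-negativity of the submodularity index under submodularity to get $2f(S)+f(V\setminus S)\ge f(C)$, then observe that $|B_i|>2$ together with $|S\cap B_i|\le d_i=1$ forces $V\setminus S$ to be infeasible, so $f(V\setminus S)=0$ by convention. Your remark that the hypothesis should read $|B_i|>2d_i$ for the general form of Eq.~(\ref{eq:problem_definition}) is a correct and worthwhile observation that the paper leaves implicit by working in the $d_i=1$ setting.
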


\begin{figure}[t]
	\centering
	\subfloat{\includegraphics[width=.49\columnwidth]{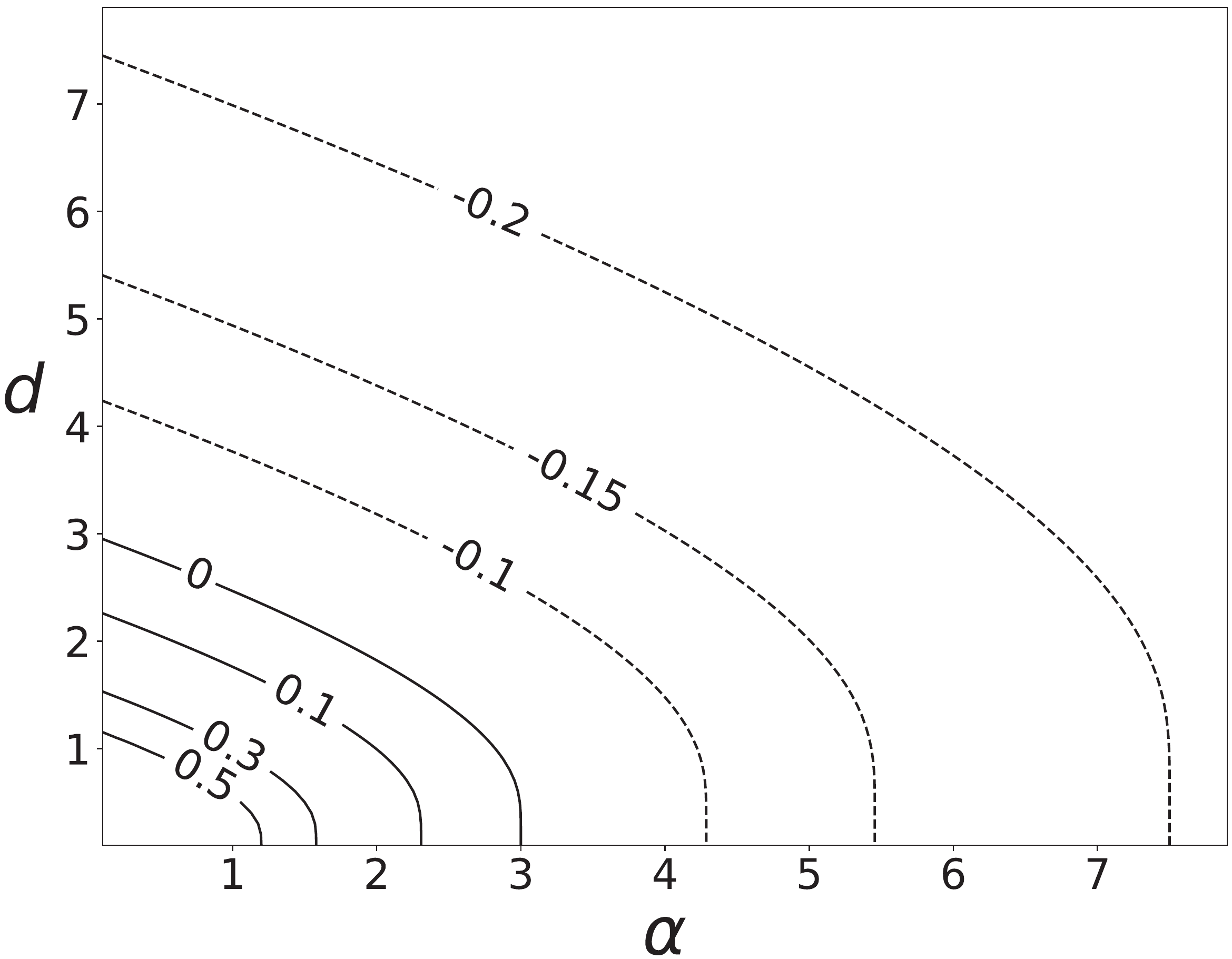}}
	\hfill
	\subfloat{\includegraphics[width=.49\columnwidth]{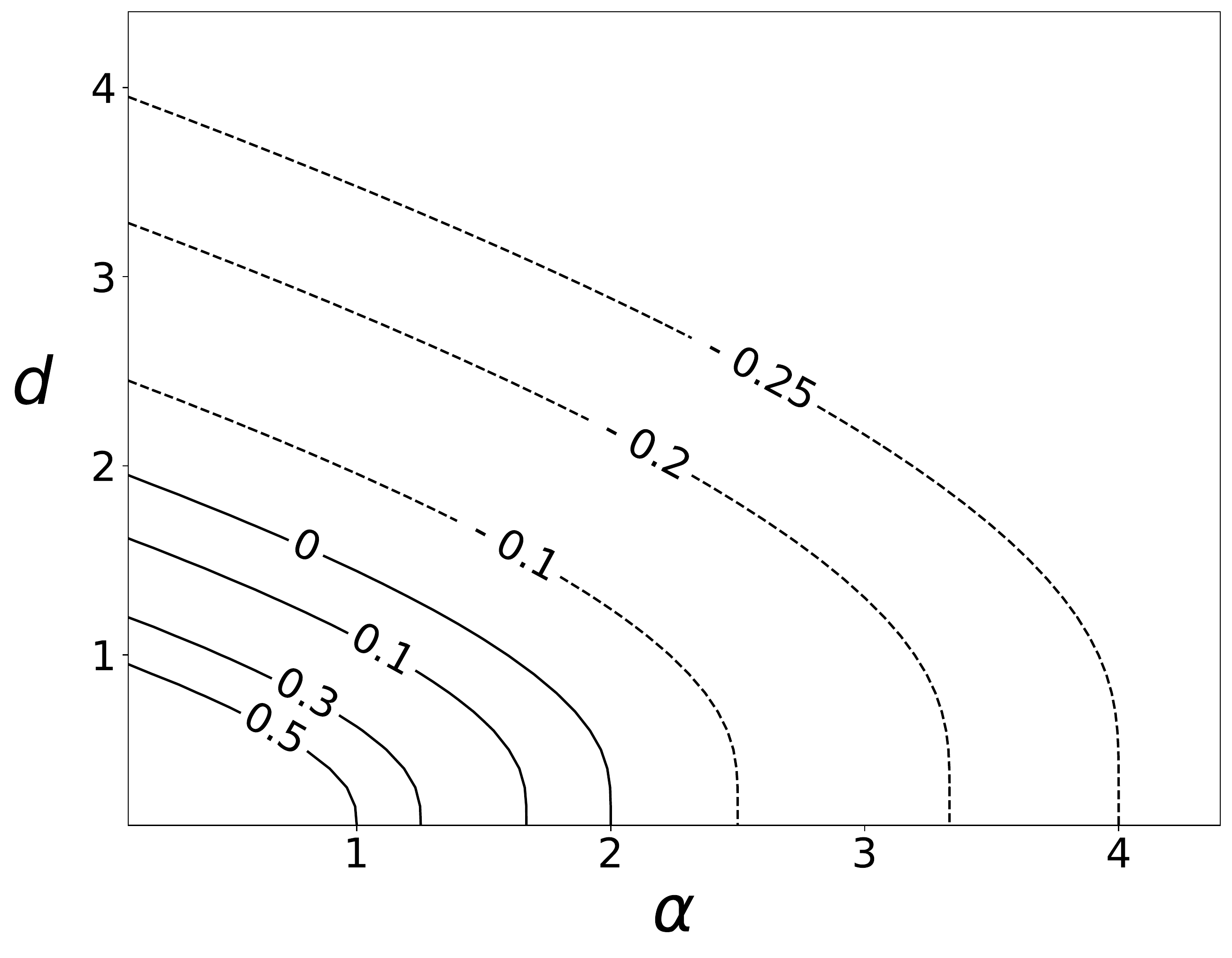}}
	\caption{Contour lines of the functions $\frac{1}{\alpha}(1-e^{-\alpha/d})-\frac{1}{3}$ (the left) and $\frac{1}{\alpha}(1-e^{-\alpha/d})-\frac{1}{2}$ (the right). Note negative value in the figures indicate that our approximation bounds are better than the previous bound.
	In the left figure, the function value is always negative when $d\geq 3$. In the right figure, the function value is always negative when $d\geq 2$.}
	\label{fig:compare_bound}
\end{figure}

We now compare the above bounds with the previous bound  $\frac{1}{\alpha}(1-e^{-\alpha \bar{d}/d})$ \cite{Friedrich2019}.
Considering the case where $d_i=1, \forall i \in [n]$, we have $\bar{d}=\max_{i \in [n]}d_i=1$, then the previous bound equals to $\frac{1}{\alpha}(1-e^{-\alpha/d})$, where $\alpha \geq 0$ and $d=\sum_{i \in [n]}d_i$.
Figure~\ref{fig:compare_bound} illustrates the contour lines of the functions $\frac{1}{\alpha}(1-e^{-\alpha/d})-\frac{1}{3}$ and $\frac{1}{\alpha}(1-e^{-\alpha/d})-\frac{1}{2}$.
It can be seen that in submodularity case our $\frac{1}{3}$-approximation and $\frac{1}{2}$-approximation bounds beat the previous bound with $d\geq3$ and $d\geq2$, respectively.

\subsection{Proof of Theorem 1}
We first introduce the definition of Submodularity Index (SmI) \cite{ZhouS16}, which measures the degree of submodularity.
Then we establish two lemmas (Lemma~\ref{lem:insert_delete} and Lemma~\ref{lem:deletion_exchange}) that relate SmI with the three conditions (insertion, deletion and exchange) for being a local optimum in Definition 3.
Finally, we prove Theorem~\ref{the:bound} based on these lemmas.

\begin{definition}
\label{def:smi}
	The submodularity index for a set function $f:2^V \rightarrow \mathbb{R}$, a set $L$, and a cardinality $k$, is defined as
	\[
	\lambda_{f}(L, k) \triangleq \min_{A \subseteq L \atop S \cap A=\emptyset,|S| \leq k} \varphi_{f}(S, A),
	\]
	where $\varphi_{f}(S, A) \triangleq \sum_{x \in S} f_{x}(A)-f_{S}(A)$ and $f_{S}(A) \triangleq f(A \cup S)-f(A)$.
\end{definition}

It is straightforward to verify that $\forall I \subseteq J$, SmI satisfies $\lambda_f(I,k) \geq \lambda_f(J,k)$.
The following two lemmas bound the degradation in submodularity with SmI, and the increase in $f$-value as the set size increases, respectively.

\begin{lemma}
	\label{lem:degradation_in_sub}
	Let $A$ be an arbitrary set, $B=A\cup\{y_1,...,y_M\}$ and $x \in \overline{B}$.
	Then $f_{x}(A)-f_{x}(B) \geq M\lambda_f(B,2)$.
\end{lemma}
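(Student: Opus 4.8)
The plan is to prove Lemma~\ref{lem:degradation_in_sub} by a telescoping argument over $y_1,\ldots,y_M$, based on the single algebraic observation that the ``second difference'' $f_x(T)-f_x(T\cup\{y\})$ equals $\varphi_f(\{x,y\},T)$, which by Definition~\ref{def:smi} is bounded below by $\lambda_f(B,2)$ as soon as $T\subseteq B$ and $T$ is disjoint from $\{x,y\}$. The first step is a harmless normalization: we may assume $y_1,\ldots,y_M$ are distinct and disjoint from $A$ (so $|B\setminus A|=M$), since repeated or already-present elements do not change $B$; and if $M=0$ the inequality is trivial.

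Next I would record the identity that for any set $T$ and any $x,y\notin T$,
\[
f_x(T)-f_x(T\cup\{y\})=f(T\cup\{x\})+f(T\cup\{y\})-f(T\cup\{x,y\})-f(T)=\varphi_f(\{x,y\},T),
\]
which follows by expanding both sides using $f_S(A)=f(A\cup S)-f(A)$ and $\varphi_f(S,A)=\sum_{z\in S}f_z(A)-f_S(A)$. Then I set $A_0=A$ and $A_j=A\cup\{y_1,\ldots,y_j\}$ for $j=1,\ldots,M$, so that $A_M=B$, and telescope:
\[
f_x(A)-f_x(B)=\sum_{j=1}^{M}\bigl(f_x(A_{j-1})-f_x(A_{j-1}\cup\{y_j\})\bigr)=\sum_{j=1}^{M}\varphi_f(\{x,y_j\},A_{j-1}).
\]

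To finish, for each $j$ I check that $A_{j-1}$ qualifies as the inner set in Definition~\ref{def:smi} for $L=B$ and $k=2$: indeed $A_{j-1}\subseteq A_M=B$, it is disjoint from $\{x,y_j\}$ because $x\in\overline{B}$ and, by the normalization, $y_j\notin A\cup\{y_1,\ldots,y_{j-1}\}$, and $|\{x,y_j\}|=2$ (note $x\neq y_j$ since $x\notin B\ni y_j$). Hence $\varphi_f(\{x,y_j\},A_{j-1})\geq\lambda_f(B,2)$, and summing the $M$ terms yields $f_x(A)-f_x(B)\geq M\lambda_f(B,2)$.

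There is no deep obstacle here; the argument is a clean telescoping identity plus an application of the definition of SmI. The only thing to be careful about is the notational overloading of ``$A$'' — the fixed set $A$ in the lemma versus the dummy set in Definition~\ref{def:smi} playing the role of $A_{j-1}$ — and the verification of the subset/disjointness/cardinality conditions at each step so that the $\lambda_f(B,2)$ lower bound legitimately applies.
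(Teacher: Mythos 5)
Your proof is correct. Note that the paper does not actually prove this lemma itself --- its ``proof'' is just a pointer to Lemma~3 of \cite{ZhouS16} --- so your self-contained telescoping argument is a welcome addition, and it is essentially the standard argument behind that cited result: the identity $f_x(T)-f_x(T\cup\{y\})=\varphi_f(\{x,y\},T)$ is exactly right, the chain $A=A_0\subseteq\cdots\subseteq A_M=B$ telescopes as you claim, and each term satisfies the subset/disjointness/cardinality conditions in Definition~\ref{def:smi}, so each is bounded below by $\lambda_f(B,2)$. One tiny remark on your normalization: if some $y_i$ were repeated or already in $A$, your argument as stated would only yield the bound with $|B\setminus A|$ in place of $M$; to recover the stated constant $M$ you should additionally observe that $\lambda_f(B,2)\leq 0$ always (take $S=\varnothing$, or any singleton, in the minimum defining $\lambda_f$, which gives $\varphi_f=0$), so that $|B\setminus A|\,\lambda_f(B,2)\geq M\lambda_f(B,2)$. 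This is a degenerate edge case --- in every use of the lemma the $y_i$ are distinct new elements --- but it is worth one line to make the normalization airtight.
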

\begin{proof}
	See Lemma 3 in \cite{ZhouS16}.
\end{proof}

\begin{lemma}
	\label{lem:bound_in_increase}
	Let $Y$ be an arbitrary set, $A \subseteq B$, and $Y \cap B = \varnothing$, then
	\begin{align*}
		f(A \cup Y)-f(A) 
		\geq &f(B \cup Y)-f(B)\\
		&+|B \setminus  A| \cdot|Y| \cdot \lambda_{f}(B \cup Y, 2).
	\end{align*}
\end{lemma}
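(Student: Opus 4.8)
The plan is to reduce Lemma~\ref{lem:bound_in_increase} to Lemma~\ref{lem:degradation_in_sub} by decomposing the marginal gain of adding $Y$ into a telescoping sum over the elements of $Y$ and applying the single-element degradation bound at each step. Concretely, I would enumerate $Y=\{y_1,\dots,y_M\}$ with $M=|Y|$, put $Y_0=\varnothing$ and $Y_k=\{y_1,\dots,y_k\}$, and note that since $A\subseteq B$ and $Y\cap B=\varnothing$ we have $y_k\notin A\cup Y_{k-1}$ and $y_k\notin B\cup Y_{k-1}$ for every $k$. Hence both $f(A\cup Y)-f(A)$ and $f(B\cup Y)-f(B)$ telescope:
\[
f(A\cup Y)-f(A)=\sum_{k=1}^{M} f_{y_k}(A\cup Y_{k-1}),\qquad f(B\cup Y)-f(B)=\sum_{k=1}^{M} f_{y_k}(B\cup Y_{k-1}),
\]
so the quantity to be bounded equals $\sum_{k=1}^{M}\bigl(f_{y_k}(A\cup Y_{k-1})-f_{y_k}(B\cup Y_{k-1})\bigr)$.

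Next I would handle each summand with Lemma~\ref{lem:degradation_in_sub}, applied to the pair $A':=A\cup Y_{k-1}\subseteq B':=B\cup Y_{k-1}$ and the element $y_k\in\overline{B'}$. The key bookkeeping step is the identity $|B'\setminus A'|=|B\setminus A|$: because $Y\cap B=\varnothing$ (and $A\subseteq B$), the set $B\setminus A$ is disjoint from $A\cup Y_{k-1}$, so $B'=(A\cup Y_{k-1})\cup(B\setminus A)$ is a genuine disjoint union and adding $Y_{k-1}$ to both sides does not change the cardinality of the difference. Lemma~\ref{lem:degradation_in_sub} then gives $f_{y_k}(A\cup Y_{k-1})-f_{y_k}(B\cup Y_{k-1})\geq |B\setminus A|\cdot\lambda_f(B\cup Y_{k-1},2)$. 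Invoking the monotonicity of the submodularity index in its set argument (stated just above Lemma~\ref{lem:degradation_in_sub}: $I\subseteq J$ implies $\lambda_f(I,k)\geq\lambda_f(J,k)$) together with $B\cup Y_{k-1}\subseteq B\cup Y$, each summand is at least $|B\setminus A|\cdot\lambda_f(B\cup Y,2)$. Summing the $M=|Y|$ inequalities yields exactly $f(A\cup Y)-f(A)\geq f(B\cup Y)-f(B)+|B\setminus A|\cdot|Y|\cdot\lambda_f(B\cup Y,2)$.

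I do not anticipate a real obstacle: the argument is a telescoping identity plus one application of Lemma~\ref{lem:degradation_in_sub} per term. The only points that need care are (i) the cardinality identity $|B'\setminus A'|=|B\setminus A|$, which is what lets the full factor $|B\setminus A|$ be extracted at every step and which relies essentially on the hypothesis $Y\cap B=\varnothing$, and (ii) using the monotonicity of $\lambda_f$ in the correct direction, so that enlarging the location set $B\cup Y_{k-1}$ to $B\cup Y$ only weakens the index. The degenerate cases $Y=\varnothing$ or $A=B$ make both sides equal and are immediate.
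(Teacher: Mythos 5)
Your proof is correct. Note that the paper itself does not prove this lemma at all --- it simply defers to Lemma~4 of Moon~\emph{et~al.}\ \cite{MoonAS19} --- so your argument is a genuinely self-contained addition rather than a restatement. The route you take is the natural one: telescope the two marginal gains over an ordering $y_1,\dots,y_M$ of $Y$, reduce each summand $f_{y_k}(A\cup Y_{k-1})-f_{y_k}(B\cup Y_{k-1})$ to an instance of Lemma~\ref{lem:degradation_in_sub}, and then uniformize the location set via the monotonicity $\lambda_f(B\cup Y_{k-1},2)\geq\lambda_f(B\cup Y,2)$. The two points you flag as needing care are exactly the right ones, and both check out: $Y\cap B=\varnothing$ guarantees both that $y_k\notin B\cup Y_{k-1}$ (so Lemma~\ref{lem:degradation_in_sub} applies with $x=y_k$) and that $(B\cup Y_{k-1})\setminus(A\cup Y_{k-1})=B\setminus A$ (so the factor $|B\setminus A|$ is extracted at every step); and the monotonicity of $\lambda_f$ in its set argument is used in the correct direction, with the multiplication by the nonnegative integer $|B\setminus A|$ preserving the inequality regardless of the sign of the index. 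Summing the $M=|Y|$ per-step bounds gives exactly the stated inequality. What your write-up buys the reader is a proof that stays entirely inside the paper's own toolkit (Definition~\ref{def:smi} plus Lemma~\ref{lem:degradation_in_sub}), whereas the paper's version requires chasing the external reference.
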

\begin{proof}
See Lemma 4 in \cite{MoonAS19}.
\end{proof}

By the following lemma, we relate SmI with the first two conditions (insertion and deletion) for being a local optimum in Definition~\ref{def:local_optimum}.

\begin{lemma}
	\label{lem:insert_delete}
	Let $S$ be a feasible solution to the problem in Eq.~(\ref{eq:problem_definition}).
	If $f(S) \geq f(S\cup \{e\})$ for any $e \in V\setminus S$ and $f(S) \geq f(S\setminus \{e\})$ for any $e \in S$, then for any $I \subseteq S \subseteq J $, the following holds.
	\begin{align*}
		&f(I) \leq f(S)-\left(\begin{array}{c}
			|S \setminus  I| \\
			2
		\end{array}\right) \lambda_{f}(S, 2)\\
		&f(J) \leq f(S)-\left(\begin{array}{c}
			|J \setminus  S| \\
			2
		\end{array}\right) \lambda_{f}(J, 2).
	\end{align*}
\end{lemma}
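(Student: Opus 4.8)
The plan is to prove the two inequalities separately, both by a telescoping decomposition of the relevant $f$-value difference combined with Lemma~\ref{lem:degradation_in_sub}. For the first inequality I would use the deletion condition, and for the second the insertion condition; in both cases the monotonicity of the submodularity index (noted right after Definition~\ref{def:smi}) is used to replace the index evaluated at intermediate sets by the index evaluated at $S$, respectively $J$.

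For the first inequality, enumerate $S \setminus I = \{x_1, \dots, x_m\}$ with $m = |S \setminus I|$ and write
\[
f(S) - f(I) = \sum_{j=1}^{m} f_{x_j}\bigl(I \cup \{x_1, \dots, x_{j-1}\}\bigr).
\]
For each $j$, apply Lemma~\ref{lem:degradation_in_sub} with $A = I \cup \{x_1, \dots, x_{j-1}\}$ and $B = S \setminus \{x_j\}$; then $B = A \cup \{x_{j+1}, \dots, x_m\}$ (a disjoint union, since $I \cap (S\setminus I) = \varnothing$) and $x_j \in \overline{B}$, so $f_{x_j}(A) \geq f_{x_j}(B) + (m-j)\lambda_f(B,2)$. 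The deletion hypothesis gives $f_{x_j}(B) = f(S) - f(S \setminus \{x_j\}) \geq 0$, and since $B \subseteq S$ we have $\lambda_f(B,2) \geq \lambda_f(S,2)$, hence $(m-j)\lambda_f(B,2) \geq (m-j)\lambda_f(S,2)$ because $m-j \geq 0$. Summing over $j$ and using $\sum_{j=1}^m (m-j) = \binom{m}{2}$ yields $f(S) - f(I) \geq \binom{|S\setminus I|}{2}\lambda_f(S,2)$, which is the claimed bound.

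For the second inequality, enumerate $J \setminus S = \{y_1, \dots, y_k\}$ with $k = |J \setminus S|$ and write $f(J) - f(S) = \sum_{j=1}^{k} f_{y_j}\bigl(S \cup \{y_1,\dots,y_{j-1}\}\bigr)$. Applying Lemma~\ref{lem:degradation_in_sub} with $A = S$ and $B = S \cup \{y_1,\dots,y_{j-1}\}$ (so $y_j \in \overline{B}$) gives $f_{y_j}(B) \leq f_{y_j}(S) - (j-1)\lambda_f(B,2)$. The insertion hypothesis gives $f_{y_j}(S) = f(S\cup\{y_j\}) - f(S) \leq 0$, and $B \subseteq J$ gives $\lambda_f(B,2) \geq \lambda_f(J,2)$, hence $-(j-1)\lambda_f(B,2) \leq -(j-1)\lambda_f(J,2)$ because $j-1 \geq 0$. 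Summing over $j$ and using $\sum_{j=1}^k (j-1) = \binom{k}{2}$ yields $f(J) - f(S) \leq -\binom{|J\setminus S|}{2}\lambda_f(J,2)$, the second claimed bound.

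The only genuine subtlety — the step I would double-check — is the direction of the inequalities when passing from $\lambda_f(B,2)$ to $\lambda_f(S,2)$ (resp. $\lambda_f(J,2)$): this works precisely because the telescoping multipliers $m-j$ and $j-1$ are nonnegative, so the argument goes through whether or not $\lambda_f$ is nonnegative, i.e., regardless of whether $f$ is submodular. Note also that feasibility of $S$ in the hypothesis is not actually used in the argument beyond making the statement of Eq.~(\ref{eq:problem_definition}) meaningful; the deletion condition is applied only to elements of $S$ and the insertion condition only to elements of $V\setminus S$, which is exactly what Definition~\ref{def:local_optimum} provides.
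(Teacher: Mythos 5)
Your proof is correct and follows essentially the same route as the paper's: the same telescoping chain decompositions of $f(S)-f(I)$ and $f(J)-f(S)$, the same applications of Lemma~\ref{lem:degradation_in_sub} with the same choices of $A$ and $B$, and the same use of the monotonicity of $\lambda_f$ together with the nonnegativity of the multipliers $m-j$ and $j-1$. Your explicit remark on why the sign of $\lambda_f$ is irrelevant is a useful clarification, and your write-up even fixes a typo in the paper's displayed coefficient ($(i-i)$ should read $(i-1)$).
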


\begin{proof}
	Let $I=I_{0} \subseteq I_{1} \subseteq \cdots \subseteq I_{k}=S$ be a chain of sets where $I_i \setminus I_{i-1}=\{a_i\}$.
	For $\forall i \in [k]$, by Lemma~\ref{lem:degradation_in_sub}, we have
	\begin{align*}
		f\left(I_{i}\right)-f\left(I_{i-1}\right) \geq f(S)&-f\left(S \setminus \left\{a_{i}\right\}\right)\\
		&+(k-i) \lambda_{f}\left(S \setminus \left\{a_{i}\right\}, 2\right).
	\end{align*}
	Since $f(S) \geq f(S\setminus \{e\})$ holds for any $e \in S$, by the property of SmI, we have
	\begin{align*}
		f\left(I_{i}\right)-f\left(I_{i-1}\right) \geq(k-i) \lambda_{f}(S, 2).
	\end{align*}
	By telescoping sum, it holds that
	\begin{align*}
		f({S})-f({I}) &\geq \sum_{i=1}^{k}(k-i) \lambda_{f}({S}, 2)\\
		&=\left(\begin{array}{c} |{S} \setminus  {I}| \\ 2 \end{array}\right) \lambda_{f}({S}, 2).
	\end{align*}
	Similarly, let $S=I_{0} \subseteq I_{1} \subseteq \cdots \subseteq I_{k}=J$ be a chain of sets where $I_i \setminus I_{i-1}=\{a_i\}$.
	For $\forall i \in [k]$, by Lemma~\ref{lem:degradation_in_sub}, we have
	\begin{align*}
		f\left(I_{i}\right)-f\left(I_{i-1}\right) \leq f(S \cup \{a_i\})&-f(S)\\
		&-(i-i) \lambda_{f}\left(I_{i-1}, 2\right).
	\end{align*}
	Since $f(S) \geq f(S\cup \{e\})$ holds for any $e \in V\setminus S$, by the property of SmI, we have
	\begin{align*}
		f\left(I_{i}\right)-f\left(I_{i-1}\right) \leq -(i-1) \lambda_{f}(J, 2).
	\end{align*}
	By telescoping sum, it holds that
	\begin{align*}
		f({J})-f({S}) &\leq -\sum_{i=1}^{k}(i-1) \lambda_{f}(J, 2)\\
		&=-\left(\begin{array}{c} |{J} \setminus  {S}| \\ 2 \end{array}\right) \lambda_{f}({J}, 2).
	\end{align*}
\end{proof}

Below we consider the third condition (exchange) for being a local optimum in Definition~\ref{def:local_optimum}.
We first introduce the well-known exchange property of matroids in Lemma~\ref{lem:exchange_property}.
Intuitively, this property states that for any two feasible solutions $I$ and $J$, we can add any element of $J$ to the set $I$ and remove at most one element from $I$ while keeping it feasible.
Moreover, each element of $I$ is allowed to be displaced by at most one element of $J$.

\begin{lemma}[Theorem~39.6 in \cite{schrijver2003combinatorial}]
	\label{lem:exchange_property}
	Let $I,J$ be two feasible solutions to the problem in Eq.~(1). Then there is a mapping $\pi:J \setminus I \rightarrow (I \setminus J) \cup \{\varnothing\}$ such that
	\begin{align*}
		\text{1) }&(I \setminus  \pi(b)) \cup\{b\} \text { is feasible for all } b \in J \setminus  I\\
		\text{2) }&\left|\pi^{-1}(e)\right| \leq 1 \text { for all } e \in I \setminus  J.
	\end{align*}
\end{lemma}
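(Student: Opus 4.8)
The plan is to use the explicit structure of the partition matroid: a set $S$ is feasible exactly when $|S \cap B_i| \le d_i$ for every $i \in [n]$, and $B_1, \dots, B_n$ partition $V$. Because feasibility decomposes over the blocks, I would build $\pi$ one block at a time and then glue the pieces together.

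First I would fix a block $B_i$ and set $I_i = I \cap B_i$, $J_i = J \cap B_i$; feasibility of $I$ and $J$ gives $|I_i| \le d_i$ and $|J_i| \le d_i$. If $|I_i| < d_i$, there is already slack in $B_i$, so I set $\pi(b) = \varnothing$ for every $b \in J_i \setminus I_i$: adding such a $b$ to $I$ raises the count in $B_i$ to at most $d_i$ and touches no other block, so $I \cup \{b\}$ stays feasible. If $|I_i| = d_i$, I need to evict one element of $B_i$ from $I$ for each incoming $b$, so I would pick $\pi$ to be an injection from $J_i \setminus I_i$ into $I_i \setminus J_i$. Such an injection exists because
\[
|I_i \setminus J_i| = d_i - |I_i \cap J_i| \ \ge\ |J_i| - |I_i \cap J_i| = |J_i \setminus I_i|,
\]
the inequality being precisely where $|J_i| \le d_i$ enters. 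For such a $b$, the set $(I \setminus \pi(b)) \cup \{b\}$ has exactly $d_i$ elements in $B_i$ and the same counts as $I$ in every other block, hence is feasible. This verifies condition 1 for every $b \in J \setminus I$.

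For condition 2, the point to note is that whenever $\pi(b) \ne \varnothing$ it lies in the same block as $b$; hence for any $e \in I \setminus J$ the preimage $\pi^{-1}(e)$ sits entirely inside the single block containing $e$, and within that block $\pi$ was defined to be injective (or constantly $\varnothing$), so $|\pi^{-1}(e)| \le 1$. Collecting the per-block definitions yields one global map $\pi : J \setminus I \to (I \setminus J) \cup \{\varnothing\}$ with both required properties.

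The only real content --- and the step I would be most careful with --- is the case distinction on whether $|I_i|$ equals $d_i$ or is strictly smaller, together with the counting inequality $|I_i \setminus J_i| \ge |J_i \setminus I_i|$ that makes the block-wise injection possible; the remainder is bookkeeping, and the fact that images stay within their blocks is exactly what upgrades local injectivity to the global bound in condition 2. For an arbitrary matroid this is Theorem~39.6 of \cite{schrijver2003combinatorial}, proved via a matching argument in the exchange bipartite graph; only the partition-matroid case is needed here, and the direct construction above handles it.
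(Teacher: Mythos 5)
Your proof is correct, but it takes a different route from the paper: the paper does not prove this lemma at all, it simply invokes the general matroid exchange theorem (Theorem~39.6 in Schrijver) as a black box, whereas you give a direct, self-contained construction that exploits the partition structure of the constraints in Eq.~(1). Your block-by-block argument is sound: the case split on $|I\cap B_i|<d_i$ versus $|I\cap B_i|=d_i$ is the right one, the counting inequality $|I_i\setminus J_i|=d_i-|I_i\cap J_i|\ge |J_i|-|I_i\cap J_i|=|J_i\setminus I_i|$ correctly guarantees the existence of the block-wise injection, and the observation that $\pi$ never maps across blocks is exactly what turns local injectivity into the global bound $|\pi^{-1}(e)|\le 1$. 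Note also that condition~1 only asks each $(I\setminus\pi(b))\cup\{b\}$ to be feasible individually, so setting $\pi\equiv\varnothing$ on an entire slack block is legitimate even though inserting all of $J_i\setminus I_i$ simultaneously might violate the budget. What the citation buys the paper is generality (the statement holds for any matroid, proved there via a matching argument in the exchange bipartite graph) and brevity; what your argument buys is transparency and independence from external machinery, at the cost of covering only the partition-matroid case --- which is, however, the only case the paper ever needs.
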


\begin{table*}[tbp]
	\centering
	\caption{Details of the textual attack approaches considered in the experiments.}
	\scalebox{0.88}{
	\begin{tabular}{ccccc}
		\toprule
		Attack Approach & Optimization Algorithm & \multicolumn{1}{c}{Objective Function} & Constraints & Word Substitution \\
		\midrule
		\makecell{PSO \\ Zang \textit{et al.}~\cite{ZangQYLZLS20}} & \makecell{Particle Swarm \\Optimization (PSO)} & Targeted & -  & \makecell{Sememe \\(HowNet word swap)} \\
		\midrule
		\makecell{GA \\ Alzantot \textit{et al.}~\cite{AlzantotSEHSC18}} & Genetic Algorithm (GA) & Targeted & \makecell{Percentage of words Perturbed,\\Language model perplexity,\\Word embedding distance} & \makecell{Counter-fitted word \\embedding swap} \\
		\midrule
		\makecell{pwws \\ Ren \textit{et al.}~\cite{RenDHC19}}  & \makecell{Saliency-based \\ Greedy Substitution (SBGS)} & \multicolumn{1}{c}{Untargeted} &   -    & \makecell{Synonym\\(WordNet word swap)} \\
		\midrule
		\makecell{morpheus \\ Tan \textit{et al.}~\cite{TanJKS20}} & \makecell{ Sequential Greedy \\ Substitution (SGS)} & Minimizing BLEU Score & -     & Infectional Morphology \\
		\midrule
		\makecell{TextFooler \\ Jin \textit{et al.}~\cite{JinJZS20}} & \makecell{Importance-based \\ Greedy Substitution (IBGS)} & Untargeted & \makecell{Word embedding distance,\\Part-of-speech match,\\USE sentence encoding cosine similarity} & \makecell{Counter-fitted word \\embedding swap} \\
		\midrule
		Ours  & Local Search (LS) & \makecell{Untargeted,\\Targeted,\\ Minimizing BLEU Score} & \multicolumn{2}{c}{Depending on the approach in which LS is embedded} \\
		\bottomrule
	\end{tabular}}
	\label{tab:approach_compare}
\end{table*}%

By the following lemma, we relate SmI with the second condition (deletion) and the third condition (exchange) for being a local optimum.
\begin{lemma}
	\label{lem:deletion_exchange}
	Let S be a feasible solution to the problem in Eq.~(1).
	If  $f(S) \geq f(S\setminus \{e\})$ for any $e \in S$,
	and $f(S) \geq f(S \setminus \{e\} \cup \{v\})$ for any $e \in S$ and $v \in V \setminus S$.
	Then for any feasible solution $C$, the following holds.
	\begin{align*}
		2f(S) \geq f(S \cup C) &+ f(S \cap C) +\\
		& \beta \lambda_{f}(V,2)+ \lambda_{f}(S,|C \setminus S|),
	\end{align*}
	where $\beta={|S \setminus C| \choose 2}+|C \setminus S|$.
\end{lemma}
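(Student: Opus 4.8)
The plan is to obtain the inequality by summing, over all $b \in C \setminus S$, one carefully rewritten copy of the exchange hypothesis; the three submodularity‑index correction terms then emerge from the bookkeeping of that sum, and the deletion hypothesis is needed only when estimating the left‑hand side.

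First I would invoke the matroid exchange property (Lemma~\ref{lem:exchange_property}) for the feasible solutions $S$ and $C$ to get a map $\pi : C \setminus S \to (S \setminus C) \cup \{\varnothing\}$ with $(S \setminus \pi(b)) \cup \{b\}$ feasible for all $b$ and $|\pi^{-1}(e)| \le 1$ for all $e \in S \setminus C$. For each $b$, the exchange hypothesis gives $f(S) \ge f\big((S \setminus \pi(b)) \cup \{b\}\big)$. Writing the right‑hand side as $f(S \setminus \pi(b)) + f_b(S \setminus \pi(b))$ with $f_b(A) := f(A \cup \{b\}) - f(A)$, lower‑bounding $f_b(S \setminus \pi(b)) \ge f_b(S) + \lambda_f(V,2)$ via Lemma~\ref{lem:degradation_in_sub} (one added element) together with $\lambda_f(S,2) \ge \lambda_f(V,2)$, and rearranging, I obtain for every $b \in C \setminus S$
\[
f(S) - f(S \setminus \pi(b)) \;\ge\; f_b(S) + \lambda_f(V,2),
\]
where for $b$ with $\pi(b) = \varnothing$ one instead uses that $S \cup \{b\}$ is already feasible to run the same argument against an arbitrary $e \in S$, obtaining the inequality with $S \setminus \{e\}$ in place of $S \setminus \pi(b)$.

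Summing over $b \in C \setminus S$, the right‑hand side contributes $|C \setminus S|\,\lambda_f(V,2) + \sum_{b} f_b(S)$, and $\sum_{b} f_b(S) = \sum_{b}\big(f(S \cup \{b\}) - f(S)\big) \ge f(S \cup C) - f(S) + \lambda_f(S,|C \setminus S|)$ by Definition~\ref{def:smi} applied with $A = S$ and inner set $C \setminus S$. On the left‑hand side, since $\pi$ takes each value in $S \setminus C$ at most once, $\sum_{b}\big(f(S) - f(S \setminus \pi(b))\big)$ is a sum of single‑element deletion marginals indexed by a subset of $S \setminus C$; enlarging the index set to all of $S \setminus C$ (each term is $\ge 0$ by the deletion hypothesis) and comparing this sum of single deletions with the batch deletion $f(S) - f(S \cap C)$ via a telescoping chain and Lemma~\ref{lem:degradation_in_sub} gives
\[
\sum_{b \in C \setminus S}\big(f(S) - f(S \setminus \pi(b))\big) \;\le\; f(S) - f(S \cap C) - \binom{|S \setminus C|}{2}\lambda_f(V,2).
\]
Chaining the two and rearranging yields $2f(S) \ge f(S \cup C) + f(S \cap C) + \big(\binom{|S \setminus C|}{2} + |C \setminus S|\big)\lambda_f(V,2) + \lambda_f(S,|C \setminus S|)$, which is the claim with $\beta = \binom{|S \setminus C|}{2} + |C \setminus S|$; Lemma~\ref{lem:bound_in_increase} can be substituted for either hand‑built chain.

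The hard part will be the left‑hand collapse together with the $\pi(b) = \varnothing$ case: one must use property (2) of Lemma~\ref{lem:exchange_property} to ensure no element of $S$ is charged twice, handle the $b$'s that $\pi$ leaves unmatched (for which $S \cup \{b\}$ is feasible) without over‑counting, and verify that the $\binom{|S \setminus C|}{2}$ and $|C \setminus S|$ degradation terms are each produced exactly once — in particular that the first is not conflated with the second when the per‑element deletions are traded for the single batch deletion to $S \cap C$.
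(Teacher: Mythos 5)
Your proposal is essentially the paper's own proof, just summed in the opposite order: both arguments combine the matroid exchange map $\pi$, the SmI definition to bound $f(S\cup C)-f(S)$ by $\sum_{b\in C\setminus S}f_b(S)+\lambda_f(S,|C\setminus S|)$, Lemma~\ref{lem:degradation_in_sub} for the single-element degradation $f_b(S)\le f_b(S\setminus\{\pi(b)\})-\lambda_f(S,2)$, the injectivity of $\pi$ together with the deletion hypothesis to pass to a sum over $S\setminus C$, and a telescoping chain from $S\cap C$ to $S$ yielding the $\binom{|S\setminus C|}{2}$ term. The one step you flag as hard --- the $\pi(b)=\varnothing$ case, where your ``arbitrary $e\in S$'' workaround would indeed risk charging an element twice and where only an insertion-type condition (not among the lemma's hypotheses) would close the argument directly --- is a genuine delicacy, but the paper's proof treats $\pi(b)$ as a bona fide element of $S\setminus C$ throughout and does not address that case either, so your sketch faithfully reproduces the published argument including its rough edge.
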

\begin{proof}
	By definition of SmI, we have
	\begin{align*}
		f(&S \cup C) -f(S) \leq\\
		&\sum_{b \in C \setminus S}[f(S \cup \{b\})-f(S)] - \lambda_f(S, |C\setminus S|).
	\end{align*}
	Also, by Lemma~\ref{lem:degradation_in_sub},
	\begin{align*}
		f_b(S)\leq f_b(S\setminus \{\pi(b)\}) - \lambda_f(S,2).
	\end{align*}
	Therefore, it holds that:
	\begin{align*}
		&f(S \cup C) - f(S) \leq\\
		&\sum_{b \in C \setminus S}[f_b(S \setminus \{\pi(b)\})-\lambda_f(S,2)] - \lambda_f(S, |C\setminus S|).
	\end{align*}
	It remains to investigate  $\sum_{b \in C \setminus S}f_b(S \setminus \{\pi(b)\})$.
	Since for any $e \in S$ and $v \in V \setminus S$, $f(S) \geq f(S \setminus \{e\} \cup \{v\})$, it holds that $f_b(S \setminus \{\pi(b)\}) \leq f(S)-f(S \setminus\{\pi(b)\}).$
	Further, by Lemma~\ref{lem:exchange_property} and the fact $f(S)\geq f(S\setminus\{\pi(b)\})$,
	\begin{align*}
		\sum_{b \in C\setminus S} [f(S)-f(S \setminus\{\pi(b)\})] \leq 
		\sum_{b \in S \setminus C}[f(S)-f(S\setminus \{b\})].
	\end{align*}
	Let $I=S\cap C$, and Let $I=I_{0} \subseteq I_{1} \subseteq \cdots \subseteq I_{|S\setminus C|}=S$ be a chain of sets where $I_i \setminus I_{i-1}=\{a_i\}$.
	Then by Lemma~\ref{lem:degradation_in_sub},
	\begin{align*}
		f\left(I_{i}\right)&-f\left(I_{i-1}\right) \geq\\
		&f(S)-f\left(S \setminus \left\{a_{i}\right\}\right)+(|S\setminus C|-i) \lambda_{f}\left(S \setminus \left\{a_{i}\right\}, 2\right).
	\end{align*}
	By telescoping sum, 
	\begin{align*}
		f({S})-&f({S\cap C}) \geq \\
		&\sum_{b \in S \setminus C}[f(S)-f(S \setminus \{b\})] +
		\left(\begin{array}{c} |{S} \setminus  {C}| \\ 2 \end{array}\right) \lambda_{f}({S}, 2).
	\end{align*}
	Summing the above results, we have:
	\begin{align*}
		f&(S\cup C)-f(S) \leq f(S)-f(S \cap C)\\
		&-\left[ 
		\left( \left(\begin{array}{c} |{S} \setminus  {C}| \\ 2 \end{array}\right) + |C \setminus S|\right)\lambda_{f}({S}, 2) + \lambda_f(S,|C\setminus S|)
		\right].
	\end{align*}
	The proof is complete.
\end{proof}

\begin{table*}[tbp]
	\centering
	\caption{Details of the attack scenarios considered in the experiments, categorized by their source publications.
			``Baseline'' represents the original optimization algorithm.
			``\#AT. Samples'' represents the number of the samples for attacking.
			``Test Per.'' represents the testing performance of the victim models. For the task of question answering, ``Test Per.'' is the average F1 score. For machine translation, ``Test Per.'' is the BLEU score. For other tasks, ``Test Per.'' refers to accuracy. }
	\begin{tabular}{cccccc}
		\toprule
		Baseline & NLP Task & Dataset & Victim Model & Test Per. (\%) & \#AT. Samples \\
		\midrule
		\multirow{6}[6]{*}{\makecell {Particle Swarm \\ Optimization (PSO)\\ Zang \textit{et al.}~\cite{ZangQYLZLS20}}}  & \multirow{4}[4]{*}{Sentiment Analysis} & \multirow{2}[2]{*}{IMDB} & BiLSTM & 89.10  & 2719 \\
		&             &       & BERT  & 90.76 & 2707 \\
		\cmidrule{3-6}          &       &       \multirow{2}[2]{*}{SST-2} & BiLSTM & 83.75 & 1180 \\
		&             &       & BERT  & 90.28 & 1198 \\
		\cmidrule{2-6}                & \multirow{2}[2]{*}{Textual Entailment} & \multirow{2}[2]{*}{SNLI} & BiLSTM & 84.43 & 2210 \\
		&             &       & BERT  & 89.58 & 2287 \\
		\midrule
		\multirow{2}[2]{*}{\makecell{Genetic Algorithm (GA)\\ Alzantot \textit{et al.}~\cite{AlzantotSEHSC18}}}  & Sentiment Analysis & IMDB  & LSTM  & 88.55 & 2698 \\
		& Textual Entailment   & SNLI  & DNN   & 82.30  & 2163 \\
		\midrule
		\multirow{4}[4]{*}{\makecell {Saliency-based \\ Greedy Substitution (SBGS)\\Ren \textit{et al.}~\cite{RenDHC19}}} & \multirow{2}[2]{*}{Sentiment Analysis} & \multirow{2}[2]{*}{IMDB} & Word-CNN & 88.26 & 2162 \\
		&             &       & BiLSTM & 85.71 & 2112 \\
		\cmidrule{2-6}                & \multirow{2}[2]{*}{News Categorization} & \multirow{2}[2]{*}{AG's News} & Char-CNN & 89.36 & 6766 \\
		&             &       & Word-CNN & 90.76 & 6872 \\
		\midrule
		\multirow{8}[6]{*}{\makecell {Sequential Greedy \\ Substitution (SGS)\\Tan \textit{et al.}~\cite{TanJKS20}}}  & \multirow{6}[4]{*}{Question Answering} & \multirow{4}[2]{*}{\makecell{SQuAD 2.0 \\ (Answerable)}} & GloVe-BiDAF & 78.67 & 3501 \\
		&             &       & ELMo-BiDAF & 80.90  & 3572 \\
		&             &       & BERT  & 81.19 & 3405 \\
		&            &       & SpanBERT & 88.52 & 3458 \\
		\cmidrule{3-6}          &             & \multirow{2}[2]{*}{\makecell{SQuAD 2.0 \\ (All)}} & BERT  & 81.52 & 6411 \\
		&       &           & SpanBERT & 87.71 & 6586 \\
		\cmidrule{2-6}               & \multirow{2}[2]{*}{Machine Translation} & \multicolumn{1}{c}{\multirow{2}[2]{*}{\makecell{newstest2014 \\ (En-Fr)}}} & ConvS2S & 40.83 & 2298 \\
		&       &            & Transformer-big & 43.16 & 2340 \\
		\midrule
		\multirow{6}[0]{*}{\makecell {Importance-based \\ Greedy Substitution (IBGS) \\Jin \textit{et al.}~\cite{JinJZS20}}} &  \multirow{3}[0]{*}{Sentiment Analysis} & \multirow{3}[0]{*}{MR} & BERT  & 85.01 & 850 \\
		&             &       & Word-CNN & 78.02 & 766 \\
		&             &       & Word-LSTM & 80.72 & 786 \\
		\cmidrule{2-6}     & \multirow{3}[0]{*}{Textual Entailment} & \multirow{3}[0]{*}{SNLI} & BERT  & 90.48 & 2433 \\
		&             &       & Infersent & 83.38 & 2122 \\
		&             &       & ESIM  & 86.12 & 2307 \\
		\bottomrule
	\end{tabular}
	\label{tab:attack_settings}
\end{table*}

We now prove Theorem~\ref{the:bound} using the above lemmas.

\begin{proof}
	By Lemma~\ref{lem:deletion_exchange}, the following holds.
	\begin{align}
	\begin{split}
	\label{ieq:2}
		2f(S) \geq f(S \cup C) &+ f(S \cap C)\\
		& +\beta \lambda_{f}(V,2)+ \lambda_{f}(S,|C \setminus S|).
	\end{split}
	\end{align}
	Also by Lemma~\ref{lem:bound_in_increase}, we have
	\begin{align}
	\begin{split}
	\label{ieq:3}
		f(S&\cup C) + f(V \setminus S)\\
		&\geq f({C} \setminus  {S})+f({V})+|\overline{{S} \cup {C}}| \cdot|{S}| \cdot \lambda_{f}({V}, 2)\\
		& \geq f({C} \setminus  {S})+|\overline{{S} \cup {C}}| \cdot|{S}| \cdot \lambda_{f}({V}, 2)
	\end{split}
	\end{align}
	\begin{align}
	\begin{split}
	\label{ieq:4}
		f(S&\cap C) + f(C \setminus S)\\
		&\geq f({C})+f(\emptyset)+|{C} \setminus  {S}| \cdot|{S} \cap {C}| \cdot \lambda_{f}({C}, 2)\\
		& \geq f({C})+|{C} \setminus  {S}| \cdot|{S} \cap {C}| \cdot \lambda_{f}({C}, 2).
	\end{split}
	\end{align}
	Summing the inequalities~(\ref{ieq:2})-(\ref{ieq:4}), we have
	\begin{align*}
		2 f({S})+f({V} \setminus  {S}) \geq & f(C)+\beta \lambda_{f}({S}, 2) \\
		&+\lambda_{f}(S,|C \setminus S|)\\
		&+|\overline{{S} \cup {C}}| \cdot|{S}| \cdot \lambda_{f}({V}, 2)\\
		&+|{C} \setminus  {S}| \cdot|{S} \cap {C}| \cdot \lambda_{f}({C}, 2).
	\end{align*}
	By the fact $\lambda_f(\cdot,2)\geq \lambda_f(V,2)$,
	\begin{align}
	\begin{split}
	\label{ieq:5}
		2f(S)+f(V \setminus S) \geq f(C) +&\delta \lambda_f(V,2) +\\
									      &\lambda_{f}(S,|C \setminus S|),
	\end{split}
	\end{align}
	where $\delta$ is defined as in Theorem~\ref{the:bound}.
	For the other half, let $I=S\cap C$ and $J=S\cup C$ in Lemma~\ref{lem:insert_delete}, the following holds.
	\begin{align}
	\begin{split}
	\label{ieq:6}
		2f(S) &\geq f(S \cup C) + f(S \cap C) + \\
		& \left(\begin{array}{c}
			|{S} \setminus  {C}| \\
			2
		\end{array}\right) \lambda_{f}({S}, 2) +
		\left(\begin{array}{c}
			|{C} \setminus  {S}| \\
			2
		\end{array}\right) \lambda_{f}({S} \cup {C}, 2).
	\end{split}
	\end{align}
	Similarly, by summing the inequilities~(\ref{ieq:3})(\ref{ieq:4})(\ref{ieq:6}), we have
	\begin{align*}
	2 f({S})+f({V} \setminus  {S}) \geq & f(C)+\left(\begin{array}{c}
		|{S} \setminus  {C}| \\
		2
	\end{array}\right) \lambda_{f}({S}, 2)\\
	&+\left(\begin{array}{c}
		|{C} \setminus  {S}| \\
		2
	\end{array}\right) \lambda_{f}({S} \cup {C}, 2)\\
	&+|\overline{{S} \cup {C}}| \cdot|{S}| \cdot \lambda_{f}({V}, 2)\\
	&+|{C} \setminus  {S}| \cdot|{S} \cap {C}| \cdot \lambda_{f}({C}, 2).
	\end{align*}
	By the fact $\lambda_f(\cdot,2)\geq \lambda_f(V,2)$,
	\begin{align}
	\begin{split}
	\label{ieq:7}
		2f(S)+f(V \setminus S) \geq f(C) + \xi \lambda_f(V,2),
	\end{split}
	\end{align}
	where $\xi$ is defined as in Theorem~\ref{the:bound}.
	Based on inequilities~(\ref{ieq:5}) and (\ref{ieq:7}), we finally have
	\begin{align*}
		2f(S)+f(V\setminus S) & \geq  f(C) + \max \{ \xi \lambda_{f}(V,2),\\ &\delta \lambda_{f}(V,2)+ \lambda_{f}(S,|C \setminus S|) \}.
	\end{align*}
	The proof is complete.
\end{proof}

\section{Experiments}
\label{sec:exp}
We conduct extensive experiments to evaluate our algorithm in various attack scenarios.
Specifically, we repeat the settings considered by five recent open-source word-level attack approaches~\cite{AlzantotSEHSC18,RenDHC19, ZangQYLZLS20,TanJKS20,JinJZS20}, and compare LS with their original optimization algorithms.
These approaches are representative in the sense that they have employed different word-substitution methods and optimization algorithms, and have achieved the state-of-the-art attack performance for various NLP tasks and victim models.
In addition, as the baselines, the optimization algorithms adopted by them represent the two mainstream choices in the literature: population-based algorithms and simple heuristics.
Finally, the objective functions considered by them are also various, including the predicted probability on a specific wrong class (targeted attack) \cite{ZangQYLZLS20,AlzantotSEHSC18}, one minus the predicted probability on the ground truth (untargeted attack) \cite{RenDHC19,JinJZS20}, and the model's loss \cite{TanJKS20}.
Morris \textit{et al.} \cite{MorrisLYGJQ20} presented a comprehensive comparison of previous textual attack approaches in terms of objective functions, constraints, perturbation strategies and optimization algorithms.
Following \cite{MorrisLYGJQ20}, we present these details of all the considered word-level attack approaches in Table~\ref{tab:approach_compare}.
Note in the experiments, for each considered approach we replace its optimization algorithm with LS, and then compare it with the original approach.
We use the fine-tuned models, datasets and attack approach implementations provided in their code repositories.
All the codes, datasets, as well as the step-by-step instructions for repeating our experiments, are available at \url{https://github.com/ColinLu50/NLP-Attack-LocalSearch}.
\begin{table*}[tbp]
	\centering
	\caption{Success rates and average query number of LS and PSO in attack scenarios introduced by Zang \textit{et al.}~\cite{ZangQYLZLS20}.
		Each table cell contains two values, a success rate (\%) and an average query number, separated by ``\textbar''.
		For PSO, its average performance $\pm$ standard deviation across 10 repeated runs is reported.
		Note for success rate, the higher the better; for query number, the smaller the better.}
	\scalebox{1.0}
	{
		\begin{tabular}{ccccccc}
			\toprule
			\multicolumn{1}{c}{\multirow{2}[4]{*}{Alg.}} &
			\multicolumn{2}{c}{IMDB} & \multicolumn{2}{c}{SST-2} & \multicolumn{2}{c}{SNLI} \\
			\cmidrule(lr){2-3} \cmidrule(lr){4-5} \cmidrule(lr){6-7}
			& BiLSTM & BERT  & BiLSTM & BERT  & BiLSTM & BERT \\
			\midrule
			PSO   & 99.84$\pm$.04\textbar3613$\pm$14 & \textbf{94.78}$\pm$.44\textbar5375$\pm$32  &  93.71$\pm$.17\textbar1269$\pm$13  &  89.40$\pm$.07\textbar1642$\pm$11  & 72.83$\pm$.42\textbar2931$\pm$71  &  76.39$\pm$.20\textbar2151$\pm$32  \\
			LS & \textbf{99.93}\textbar\textbf{2220} & 94.16\textbar\textbf{4332} & \textbf{94.07}\textbar\textbf{295} & \textbf{89.57}\textbar\textbf{344} & \textbf{74.71}\textbar\textbf{246} & \textbf{76.43}\textbar\textbf{233} \\
			\bottomrule
	\end{tabular}}
	\label{tab:success_efficiency_pso}
\end{table*}

\begin{table*}[tbp]
	\centering
	\caption{Success rates and average query number of LS and SGS in attack scenarios introduced by Tan \textit{et al.}~\cite{TanJKS20}.}
	\scalebox{1.0}{
		\begin{tabular}{ccccccccc}
			\toprule
			\multirow{2}[2]{*}{Alg.} & \multicolumn{4}{c}{SQuAD 2.0 (Answerable)} & \multicolumn{2}{c}{SQuAD 2.0 (All)} & \multicolumn{2}{c}{newstest2014 (En-Fr)} \\
			\cmidrule(lr){2-5} \cmidrule(lr){6-7} \cmidrule(lr){8-9}
			& GloVe-BiDAF & ELMo-BiDAF & BERT  & SpanBERT & BERT  & SpanBERT & ConvS2S & Transformer-big \\
			\midrule
			SGS   & 13.94\textbar\textbf{32} & 11.28\textbar\textbf{33} & 18.27\textbar\textbf{32} & 14.03\textbar\textbf{33} & 14.87\textbar\textbf{31} & 10.70\textbar\textbf{33} & 44.13\textbar\textbf{53} & 38.25\textbar\textbf{55} \\
			LS    & \textbf{22.88}\textbar61 & \textbf{19.20}\textbar64 & \textbf{24.99}\textbar49 & \textbf{18.77}\textbar51 & \textbf{21.82}\textbar53 & \textbf{15.26}\textbar55 & \textbf{48.74}\textbar92 & \textbf{42.18}\textbar90 \\
			\bottomrule
	\end{tabular}}
	\label{tab:success_efficiency_mehpheus}
\end{table*}

\begin{table*}[tbp]
	\centering
	\caption{Success rates and average query number of LS and IBGS in attack scenarios introduced by Jin \textit{et al.}~\cite{JinJZS20}.}
	\begin{tabular}{ccccccc}
		\toprule
		\multirow{2}[4]{*}{Alg.} & \multicolumn{3}{c}{MR} & \multicolumn{3}{c}{SNLI} \\
		\cmidrule(lr){2-4} \cmidrule(lr){5-7}          & BERT  & WordLSTM & WordCNN & BERT  & Infersent & ESIM \\
		\midrule
		IBGS & 78.59\textbar\textbf{127.9} & 88.49\textbar\textbf{102.2} & 90.08\textbar\textbf{100.9} & 88.90\textbar\textbf{68.9} & 93.97\textbar\textbf{63.3} & 90.29\textbar\textbf{68.4} \\
		\midrule
		LS    & \textbf{92.94}\textbar613.7 & \textbf{94.50}\textbar564.9 & \textbf{95.95}\textbar566.2 &  \textbf{95.85}\textbar 278.5 & \textbf{98.65}\textbar258.7 & \textbf{97.44}\textbar286.3 \\
		\bottomrule
	\end{tabular}%
	\label{tab:success_efficiency_textfooler}%
\end{table*}%

\begin{table}[tbp]
	\centering
	\caption{Success rates and average query number of LS and SBGS in attack scenarios introduced by Ren \textit{et al.}~\cite{RenDHC19}.}
	\scalebox{1.0}
	{
		\begin{tabular}{ccccc}
			\toprule
			\multirow{2}[2]{*}{Alg.} & \multicolumn{2}{c}{IMDB} & \multicolumn{2}{c}{AG's News} \\
			\cmidrule(lr){2-3} \cmidrule(lr){4-5}
			& Word-CNN & BiLSTM & Char-CNN & Word-CNN \\
			\midrule
			SBGS  & 82.84\textbar\textbf{153} & 87.78\textbar\textbf{153} & 74.33\textbar\textbf{75}& 81.49\textbar\textbf{75}\\
			LS    & \textbf{85.62}\textbar599 & \textbf{92.28}\textbar579 & \textbf{82.62}\textbar165 & \textbf{87.54}\textbar173 \\
			\bottomrule
	\end{tabular}}
	\label{tab:success_efficiency_pwws}%
\end{table}

\subsection{Attack Scenarios}
Table~\ref{tab:attack_settings} summarizes all the 26 different attack scenarios (each row is a unique scenario) considered in the experiments,
which in total involve 5 NLP tasks, 8 datasets and 26 NLP models.
From the perspective of optimization, each scenario in Table~\ref{tab:attack_settings} represents a specific sub-class of instances of the optimization problem defined in Eq~(\ref{eq:problem_definition}).
Therefore, LS will be tested on 26 different sub-classes of problem instances, containing 75909 instances in total, which is expected to be sufficient for assessing its performance and robustness.

\subsection{Experimental Setup}
\label{sec:exp_setup}
\paragraph{Algorithm Settings}
For the baseline algorithms, we use their recommended hyper-parameter settings.
Specifically, for PSO \cite{ZangQYLZLS20}, $V_{max}$, $\omega_{max}$, $\omega_{min}$, $P_{max}$, $P_{min}$ and $k$ are set to 1, 0.8, 0.2, 0.8, 0.2 and 2, respectively.
For GA \cite{AlzantotSEHSC18}, $N$, $K$ and $\delta$ are set to 8, 4 and 0.5, respectively.
For both PSO and GA, the maximum number of iteration and the population size are set to 20 and 60.
As for the three heuristics and LS, all of them are free of hyper-parameters.

All the algorithms will immediately terminate once they achieve successful attacks, which means the model predicts the targeted label for targeted attack, or any wrong label for untargeted attack.
When maximizing the model's loss \cite{TanJKS20}, successful attacks mean the task's score (e.g., F1 and BLEU) becomes 0.

\begin{table}[tbp]
	\centering
	\caption{Success rates and average query number of LS and GA in attack scenarios introduced by Alzantot \textit{et al.}~\cite{AlzantotSEHSC18}.
		For GA, its average performance $\pm$ standard deviation across 10 repeated runs is reported.}
	\scalebox{1.0}{
		\begin{tabular}{ccc}
			\toprule
			Alg. & IMDB+LSTM  & SNLI+DNN \\
			\midrule
			GA    & 94.63$\pm$.54\textbar2257$\pm$89 & 33.77$\pm$1.08\textbar2989$\pm$41 \\
			LS    & \textbf{99.33}\textbar\textbf{665} & \textbf{42.81}\textbar\textbf{128} \\
			\bottomrule
	\end{tabular}}
	\label{tab:success_efficiency_ga}
\end{table}

\paragraph{Samples for Attacking}
It is conceivable that the shorter the original input is, the fewer words in it that can be substituted, and therefore the more difficult it is to craft a successful adversarial example.
For those too short inputs, it is possible that there exist no successful adversarial examples in the whole search spaces, in which cases comparing different algorithms is just meaningless.
On the other hand, those too long inputs are too easy to be successfully attacked since there exist so many words in them that can be substituted.
As a result, using these inputs for attacking cannot discriminate well between the search capabilities of different optimization algorithms.
Based on the above considerations, in the experiments we restrict the lengths of the original inputs to 10-100, following  \cite{AlzantotSEHSC18,ZangQYLZLS20}.
Also, we consider the adversarial examples with modification rates higher than 25\% as failed attacks.
Previous studies \cite{AlzantotSEHSC18, ZangQYLZLS20, RenDHC19} usually sample a number (typically 1000) of correctly classified instances from the test sets as the original input for attacking, which however may introduce \textit{selection bias} for assessing the attack performance.
To avoid this issue, we use \textit{all} the correctly classified testing instances for attacking, which usually leads to much more used samples compared to previous studies (see the last column of Table~\ref{tab:attack_settings}).
As aforementioned, in word-level textual attack, the objective function can be non-submodular (see Section~\ref{sec:problem_formulation}).
To illustrate this, we randomly select 100 samples from each of AG's news, SST-2 and IMDB datasets (their associated victim models are Word-CNN, BiLSTM and BERT, respectively), and run experiments to verify whether their objective functions are submodular.
The detailed experimental results are available at \url{https://github.com/ColinLu50/NLP-Attack-LocalSearch}. 
It is shown that only one example from SST-2 has submodular objective function.
This implies generally we can be almost sure that the objective function in word-level textual attack is non-submodular.

\paragraph{Evaluation Metrics}
Similar to previous studies, we use attack success rates to assess the attacking ability.
In addition, we also consider query number as the metric of attack efficiency.
Concretely, attack success rate refers to the percentage of successful attacks, and the query number is the number of queries consumed by the optimization algorithms for attacking an example.

\subsection{Results and Analysis}
Tables~\ref{tab:success_efficiency_pso}-\ref{tab:success_efficiency_ga}
present the attack success rates and the average query number of LS and the corresponding baselines in each attack scenario.
Considering PSO \cite{ZangQYLZLS20} and GA \cite{AlzantotSEHSC18} are both randomized algorithms, to make fair comparison, we run these two algorithms for 10 times, and report their average performance.

The first observation from these results is that generally LS obtains higher success rates than the baselines.
Actually, it performs better than the baselines in 25/26 scenarios.
Compared with GA (Table~\ref{tab:success_efficiency_ga}) and the three heuristics SBGS (Table~\ref{tab:success_efficiency_pwws}), IBGS (Table~\ref{tab:success_efficiency_textfooler}) and SGS (Table~\ref{tab:success_efficiency_mehpheus}), the advantage of LS in terms of success rates is particularly significant, ranging from 2.78\% to 9.04\%.
Although PSO can achieve nearly competitive success rates to LS, the latter performs much better than PSO (and GA) in terms of query efficiency.  
Notably, compared with these two population-based algorithms, in 6/8 scenarios LS dramatically reduces the query number by an order of magnitude.
Compared with SBGS, IBGS and SGS, LS requires more queries.
However, the amount of increase is usually reasonable (e.g., around 30 in Table~\ref{tab:success_efficiency_mehpheus}) and therefore acceptable, considering LS achieves significantly higher success rates.

One may notice that the attack success rates in Table~\ref{tab:success_efficiency_mehpheus} are much lower than others.
This is due to the fact that the settings considered by Tan \textit{et al.}~\cite{TanJKS20} (e.g., the criteria of successful attacks, see Section~\ref{sec:exp_setup}), are much more challenging.
Nevertheless, the high success rates obtained here on attacking popular models such as BERT and BiLSTM clearly demonstrate the vulnerability of DNNs.

\begin{figure}[tbp]
	\includegraphics[width=\columnwidth]{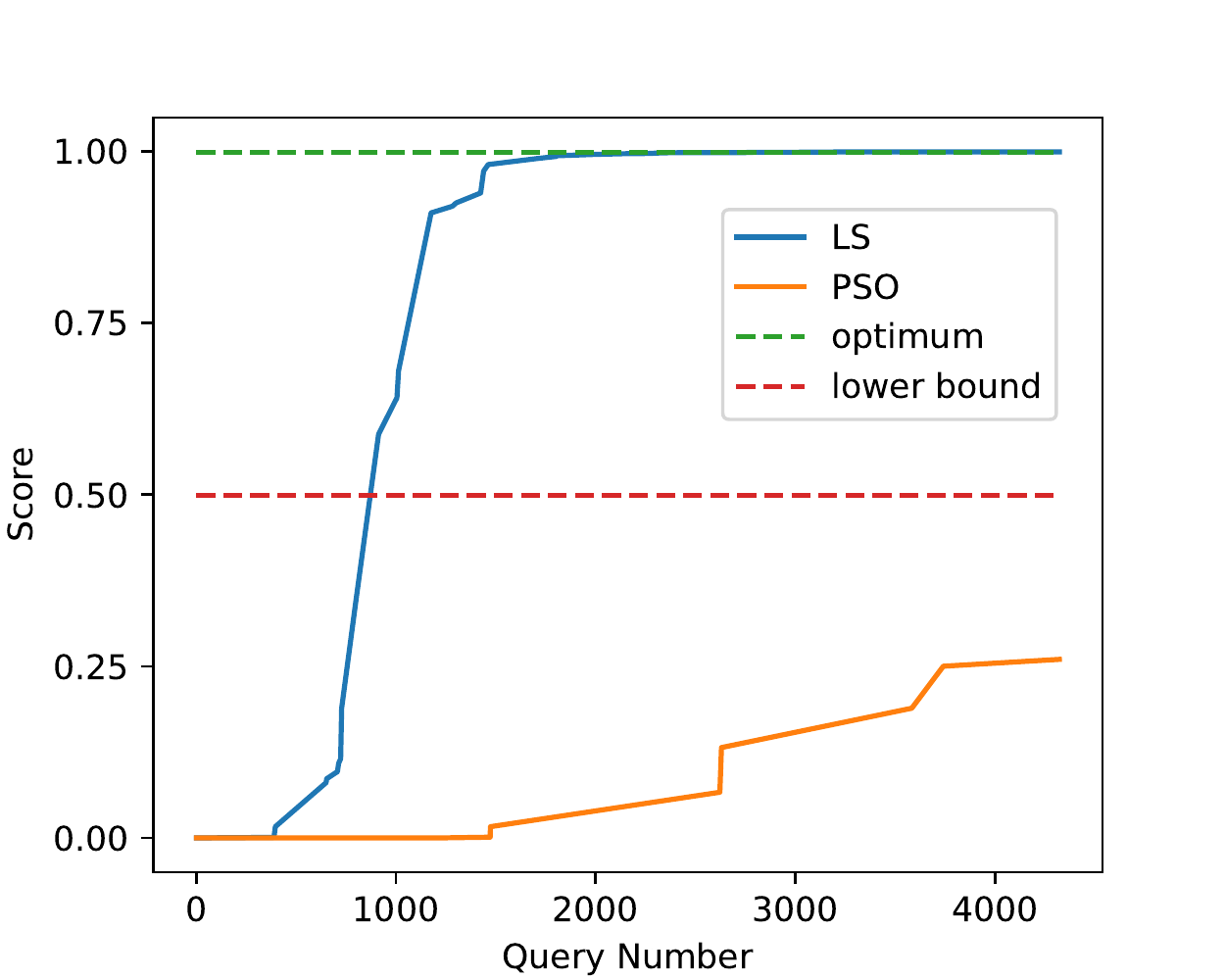}
	\caption{The optimum, the lower bound, as well as the scores of the solutions found by LS and PSO on an example from the SST-2 dataset.}
	\label{fig:relation}
\end{figure}

It is meaningful to reveal how the theoretical results established in this paper relate to the practical performance of LS.
Since Theorem~\ref{the:bound} states that the gap between the score obtained by LS and the optimum is strictly bounded, we randomly select an example from SST-2 dataset \footnote{Specifically, the example is ``Well written, nicely acted and beautifully shot and scored, the film works on several levels, openly questioning social mores while ensnaring the audience with its emotional pull.''} (the victim model is BERT) and enumerate all the possible adversarial examples in the search space to find the optimum and calculate the SmI, i.e., Submodularity Index (see Definition~\ref{def:smi}).
Then based on these results we further calculate the lower bound of the score obtained by LS according to Theorem 1, and run LS and PSO on this example until termination.
Finally, we illustrate the optimum, the lower bound, as well as the scores of the solutions found by LS and PSO in Figure~\ref{fig:relation}.
The first observation from Figure~\ref{fig:relation} is that the lower bound is very close to 0.5.
This implies that LS will almost surely find a successful adversarial example because it is guaranteed to obtain a score not smaller than the lower bound.
Indeed, LS achieves this goal and even approaches the optimum.
In comparison, PSO lacks such guarantee and its performance can be arbitrarily bad.
This sheds some light on the results of previous experiments that LS consistently achieves significantly higher attack success rates than the baselines.
That is to say, Theorem 1 provides the \textit{worst-case} performance guarantee for LS, while in practice LS can usually achieve much better scores than the theoretical lower bound.

\begin{table*}[tbp]
	\centering
	\caption{Evaluation results in terms of modification rate (Modi.), grammaticality (IPW) and fluency (PPL), in the scenarios introduced by Zang \textit{et al.}~\cite{ZangQYLZLS20}.
		Note for all these three metrics, the smaller the better.}
	\scalebox{1.0}
	{
		\begin{tabular}{ccccccccccc}
			\toprule
			\multirow{2}[4]{*}{\shortstack{Victim \\ Model}} & \multirow{2}[4]{*}{Alg.} & \multicolumn{3}{c}{IMDB} & \multicolumn{3}{c}{SST-2} & \multicolumn{3}{c}{SNLI} \\
			\cmidrule(lr){3-5} \cmidrule(lr){6-8} \cmidrule(lr){9-11}          &       & Modi. (\%)   & IPW ($\text{E-}{4}$) & PPL   & Modi. (\%)   & IPW ($\text{E-}{4}$) & PPL   & Modi. (\%)   & IPW ($\text{E-}{4}$) & PPL \\
			\midrule
			\multirow{2}[2]{*}{BiLSTM} & PSO   & 4.28  & 2.286 & 96.63 & 8.99  & \textbf{0.930} & 275.58 & \textbf{11.76} & 5.512 & 210.37 \\
			& LS    & \textbf{3.50} & \textbf{1.361} & \textbf{88.74} & \textbf{8.92} & 1.851 & \textbf{263.89} & 11.85 & \textbf{4.841} & \textbf{206.94} \\
			\midrule
			\multirow{2}[2]{*}{BERT} & PSO   & 4.12  & 2.208 & 99.04 & 8.12  & 4.292 & 279.82 & 11.69 & \textbf{2.285} & 215.65 \\
			& LS    & \textbf{3.83} & \textbf{1.803} & \textbf{96.18} & \textbf{7.88} & \textbf{3.849} & \textbf{272.26} & \textbf{11.67} & 4.561 & \textbf{208.89} \\
			\bottomrule
		\end{tabular}%
	}
	\label{tab:quality_PSO}%
\end{table*}
\begin{table}[tbp]
	\centering
	\caption{Human evaluation results of validity and perceptibility of the crafted adversarial examples on attacking BERT on SST-2, with sememe-based word substitution.
		The second row also lists the evaluation results of the original input. 
		``Valid.'' is the percentage of valid adversarial examples.
		``Percep.'' is the average perceptibility score. For both of them, the higher the better.}
	\scalebox{1.0}{
		\begin{tabular}{cccc}
			\toprule
			Victim Model & Alg.  & Valid. (\%) & Percep. \\
			\midrule
			N/A   & Ori. Input & 87.0    & 3.46 \\
			\midrule
			\multirow{2}[2]{*}{BERT} & PSO   & \textbf{79.0}    & 2.64 \\
			& LS    & 78.0    & \textbf{2.69} \\
			\bottomrule
	\end{tabular}}
	\label{tab:human_evaluation}
\end{table}

\subsection{Adversarial Example Quality and Validity}
We further evaluate the quality, validity and naturality of the crafted adversarial examples.
The validity (evaluated by human) is measured by the percentage of successful adversarial examples with unchanged \textit{true} labels.
As for adversarial example quality, similar to \cite{ZangQYLZLS20}, we consider modification rate, fluency, grammaticality, and perceptibility.
Concretely, modification rate is the percentage of words in the adversarial examples that differ from the original input; fluency is measured by the perplexity (PPL) of GPT-2 \cite{radford2019language}.
For grammaticality, we obtain the increase in number of grammatical error of adversarial examples compared to the original input, with the help of LanguageTool (\url{https://languagetool.org/}), and then divide it by the length of original input to calculate the \textit{amount of increase per word processed} (IPW).
Unlike the metric used by \cite{ZangQYLZLS20} which measures the increase rate of grammical error, IPW measures the average increase of grammatical error as the attack approach processing one input word.
Finally, the perceptibility refers to how indistinguishable the adversarial examples are from human-written texts.

Table~\ref{tab:quality_PSO} presents the evaluation results in terms of modification rate, fluency (PPL), and grammaticality (IPW), of the adversarial examples crafted in the scenarios introduced by Zang \textit{et al.}~\cite{ZangQYLZLS20}.
Results in other scenarios are similar, and are given in Appendix~A.
The results in Table~\ref{tab:quality_PSO} indicate that in most scenarios the adversarial examples crafted by LS are better than that crafted by the baseline.
It is worth mentioning the IPW in Table~\ref{tab:quality_PSO} is typically of the order $10^{-4}$, which means the attack approach introduces roughly 1 grammatical error for every 10000 input words processed.

We perform human evaluation on 100 adversarial examples crafted by PSO and LS respectively, for attacking BERT on SST-2 with sememe-based word substitution.
We ask three movie fans (volunteers) to make a binary sentiment classification (i.e., labeling it as ``positive'' or ``negative''), and give a perceptibility score chosen from $\{0,1,2,3,4\}$ which indicates ``Machine-generated'', ``More like Machine-generated'', ``Not Sure'', ``More like Human-written'' and ``Human-written'', respectively.
The final sentiment labels are determined by majority voting, and the final perceptibility scores are determined by averaging.
Table~\ref{tab:human_evaluation} presents the results.
Overall the adversarial examples crafted by LS are comparable to that crafted by PSO, and both of them are inferior to original human-authored input.
Nevertheless, based on human evaluation, overall the crafted examples (with average perceptibility score larger than 2) are more like human-written than machine-generated.
Table~\ref{tab:cases} displays some adversarial examples crafted by LS and the baselines on IMDB dataset, with different word-substitution methods.

\begin{table}[tbp]
	\centering
	\caption{The classification accuracy of the transferred adversarial examples crafted by PSO and LS in the scenarios considered by Zang \textit{et al.}~\cite{ZangQYLZLS20}.}
	\label{tab:transferability}%
	\scalebox{1.0}{
		\begin{tabular}{ccccc}
			\toprule
			Transfer & Alg. & IMDB  & SST-2 & SNLI \\
			\midrule
			\multirow{2}[2]{*}{BiLSTM $\Rightarrow$ BERT} & PSO   & 80.70  & 69.47 & 61.40 \\
			& LS    & \textbf{77.48} & \textbf{64.32} & \textbf{58.93} \\
			\midrule
			\multirow{2}[2]{*}{BERT $\Rightarrow$ BiLSTM} & PSO   & 82.85 & 63.43 & 51.60 \\
			& LS    & \textbf{79.40} & \textbf{59.06} & \textbf{49.71} \\
			\bottomrule
	\end{tabular}}
\end{table}

\subsection{Transferability}
The transferability of an adversarial example refers to its ability to attack other unseen models \cite{GoodfellowSS15}.
We evaluate transferability in the attack scenarios introduced by Zang \textit{et al.}~\cite{ZangQYLZLS20}, where the used sememe-based word-substitution often leads to better transferability than other word-substitution methods.
Specifically, on each dataset, we use BERT to classify the adversarial examples crafted for attacking BiLSTM, and vice versa.
Table~\ref{tab:transferability} presents the classification results on the adversarial examples crafted by LS and PSO.
Note lower classification accuracy indicates better transferability.
In can be observed that the adversarial examples crafted by LS generally exhibit better transferability than that crafted by PSO.

\begin{table}[tbp]
	\centering
	\caption{The decrease in attack success rates (\%) after retraining, when attacking BiLSTM on SST-2 with sememe-based word substitution.
		``Alg.'' and ``Adv. T'' represent the optimization algorithm for attacking and adversarial training, respectively.
		``w/o'' means the (original) attack success rates without retraining.}
	\scalebox{1.0}
	{
		\begin{tabular}{cccc}
			\toprule
			Alg. \textbackslash{} Adv. T & w/o  & PSO   & LS \\
			\midrule
			PSO   & 93.71 & -2.29 & -3.18 \\
			LS    & 94.07 & -2.28 & -2.39 \\
			\bottomrule
	\end{tabular}}
	\label{tab:adversarial_training_PSO}%
\end{table}
\begin{table}[tbp]
	\centering
	\caption{The decrease in attack success rates (\%) after retraining, when attacking BERT on IMDB with synonym-based word substitution.}
	\scalebox{1.0}
	{
		\begin{tabular}{cccc}
			\toprule
			Alg. \textbackslash{} Adv. T & w/o  & SBGS  & LS \\
			\midrule
			SBGS  & 82.84 & -7.40 & -8.68 \\
			LS    & 85.62 & -5.45 & -7.46 \\
			\bottomrule
	\end{tabular}}
	\label{tab:adversarial_training_sbGS}
\end{table}
\begin{table*}[tbp]
	\centering
	\caption{Some adversarial examples crafted by LS and the baselines on IMDB dataset, with different word-substitution methods.}
	\begin{tabular}{c}
		\toprule[.3em]
		\multicolumn{1}{c}{IMDB Example 1} \\
		\midrule[.2em]
		\multicolumn{1}{l}{\textbf{Original Input} (Prediction=Negative)} \\
		\midrule
		\multicolumn{1}{l}{\shortstack[l]{I'm normally a fan of Mel Gibson, but in this case he did a movie with a poor script. The acting for the most\\ part really wasn't that bad,
				but the story was just pointless with flaws and boring.}} \\
		\midrule[.2em]
		\multicolumn{1}{l}{\textbf{Synonym-based Word Substitution + LS} (Prediction=Positive):} \\
		\midrule
		\multicolumn{1}{l}{\shortstack[l]{I'm normally a fan of Mel Gibson, but in this case he did a movie with a \textbf{short} script. The acting for the most\\ part really wasn't that \textbf{tough}, but the story was just \textbf{superfluous} with flaws and boring.}} \\
		\midrule
		\multicolumn{1}{l}{\textbf{Synonym-based Word Substitution + SBGS} (Prediction=Positive):} \\
		\midrule
		\multicolumn{1}{l}{\shortstack[l]{I'm normally a fan of \textbf{David} Gibson, but in this \textbf{type} he did a movie with a \textbf{short} script. The acting for the most\\ part really wasn't that \textbf{tough},
				but the story was just \textbf{superfluous} with flaws and boring.}} \\
		\midrule[.2em]
		\multicolumn{1}{l}{\textbf{Counter-fitted Word Embedding Substitution + LS} (Prediction=Positive):} \\
		\midrule
		\multicolumn{1}{l}{\shortstack[l]{I'm normally a \textbf{admirer} of Mel Gibson, but in this case he did a movie with a \textbf{needy} script. The acting for the most\\ part really wasn't that bad,
				but the story was \textbf{righteous} \textbf{vain} with flaws and boring.}} \\
		\midrule
		\multicolumn{1}{l}{\textbf{Counter-fitted Word Embedding Substitution + GA} (Prediction=Positive):} \\
		\midrule
		\multicolumn{1}{l}{\shortstack[l]{I'm normally a \textbf{admirer} of Mel Gibson, but in this case he did a movie with a \textbf{needy} script. The acting for the \textbf{longer}\\ part really wasn't that \textbf{amiss},
				but the \textbf{tales} was \textbf{righteous} \textbf{unnecessary} with faults and boring.}}\\
		\midrule[.2em]
		\multicolumn{1}{l}{\textbf{Sememe-based Word Substituion + LS} (Prediction=Positive):} \\
		\midrule
		\multicolumn{1}{l}{\shortstack[l]{I'm normally a fan of Mel Gibson, but in this \textbf{posture} he \textbf{forged} a movie with a \textbf{wrenching} script. The acting for the most\\ part really wasn't that bad,
				but the story was just \textbf{rash} with flaws and boring.}} \\
		\midrule
		\multicolumn{1}{l}{\textbf{Sememe-based Word Substituion + PSO} (Prediction=Positive):} \\
		\midrule
		\multicolumn{1}{l}{\shortstack[l]{I'm \textbf{overall} a fan of Mel Gibson, but in this case he \textbf{forged} a movie with a \textbf{wrenching} script. The acting for the most\\ part really wasn't that bad,
				but the story was \textbf{likely} \textbf{rash} with \textbf{vulnerabilities} and boring.}} \\
		\midrule[.2em]
		\\
		\midrule[.3em]
		\multicolumn{1}{c}{IMDB Example 2} \\
		\midrule[.2em]
		\multicolumn{1}{l}{\textbf{Original Input} (Prediction=Negative)} \\
		\midrule
		\multicolumn{1}{l}{\shortstack[l]{People, please don't bother to watch this movie! This movie is bad! It's totally waste of time. I don't see any point here.\\ It's a stupid film with lousy plot and the acting is poor. I rather get myself beaten than watch this movie ever again.}} \\
		\midrule[.2em]
		\multicolumn{1}{l}{\textbf{Synonym-based Word Substitution + LS} (Prediction=Positive):} \\
		\midrule
		\multicolumn{1}{l}{\shortstack[l]{People, please don't \textbf{get} to watch this movie! This movie is \textbf{tough}! It's totally \textbf{wastefulness} of time. I don't see any point here.\\ It 's a stupid film with \textbf{dirty} plot and the acting is \textbf{short}. I rather get myself beaten than watch this movie ever again.}} \\
		\midrule
		\multicolumn{1}{l}{\textbf{Synonym-based Word Substitution + SBGS} (Prediction=Positive):} \\
		\midrule
		\multicolumn{1}{l}{\shortstack[l]{People, please don't \textbf{get} to \textbf{catch} this movie! This movie is tough! It's totally \textbf{wasteland} of time. I don't see any \textbf{degree} here.\\ It's a stupid film with \textbf{dirty} plot and the acting is \textbf{short}. I rather get myself beaten than watch this movie ever again.}} \\
		\midrule[.2em]
		\multicolumn{1}{l}{\textbf{Counter-fitted Word Embedding Substitution + LS} (Prediction=Positive):} \\
		\midrule
		\multicolumn{1}{l}{\shortstack[l]{People, please don't \textbf{irritate} to watch this movie! This movie is \textbf{wicked}! It's totally \textbf{litter} of time. I don't see any point here.\\ It's a stupid film with \textbf{miserable} plot and the acting is \textbf{poorer}. I rather get myself beaten than watch this movie ever again.}} \\
		\midrule
		\multicolumn{1}{l}{\textbf{Counter-fitted Word Embedding Substitution + GA} (Prediction=Positive):} \\
		\midrule
		\multicolumn{1}{l}{\shortstack[l]{People, \textbf{invited} don't \textbf{irritate} to watch this movie. This movie is \textbf{naughty}! It's \textbf{abundantly} \textbf{debris} of time. I don't see any point here.\\ It's a \textbf{daft} film with \textbf{miserable} plot and the acting is \textbf{poorer}. I rather get myself \textbf{pummeled} than watch this movie ever again.}}\\
		\midrule[.2em]
		\multicolumn{1}{l}{\textbf{Sememe-based Word Substituion + LS} (Prediction=Positive):} \\
		\midrule
		\multicolumn{1}{l}{\shortstack[l]{People, please don't bother to watch this movie. This movie is bad! It's totally waste of time. I don't see any point here.\\ It's a \textbf{bookish snapshot} with \textbf{spongy} plot and the acting is \textbf{wrenching}. I rather get myself beaten than watch this movie ever again.}} \\
		\midrule
		\multicolumn{1}{l}{\textbf{Sememe-based Word Substituion + PSO} (Prediction=Positive):} \\
		\midrule
		\multicolumn{1}{l}{\shortstack[l]{People, please don't bother to watch this movie. This movie is bad. It's \textbf{decidedly} waste of time. I don't see any point here.\\ It's a stupid \textbf{snapshot} with \textbf{off} plot and the acting is \textbf{wrenching}. I rather get myself beaten than watch this \textbf{polka} ever again.}} \\
		\bottomrule[.2em]
	\end{tabular}
	\label{tab:cases}
\end{table*}
\subsection{Adversarial Training}
By incorporating adversarial examples into the training process, adversarial training is aimed at improving the robustness of victim models \cite{GoodfellowSS15}.
Here we evaluate the robustness improvement in two attack scenarios.
Under the first scenario introduced by Zang \textit{et al.}~\cite{ZangQYLZLS20} where sememe-based word substitution is used, we use LS and PSO to craft 553 adversarial examples respectively (8\% of the original training set size) by attacking BiLSTM on the \textit{training set} of SST-2.
Then we include the adversarial examples into the training set and retrain a BiLSTM.
Once again, we use LS and PSO to attack this new model and obtain the attack success rates.
Finally, we assess the robustness improvement as the decrease in success rates after retraining.
The larger the decrease, the larger the robustness improvement.
The above whole procedure is repeated in the second scenario introduced by Ren \textit{et al.}~\cite{RenDHC19} with IMDB dataset and BERT, where synonym-based word substitution is used and the baseline algorithm is SBGS.
Table~\ref{tab:adversarial_training_PSO} and Table~\ref{tab:adversarial_training_sbGS} present the results.
It can be observed that adversarial training indeed makes the models more robust to adversarial attacks.
Moreover, the adversarial examples crafted by LS generally bring larger robustness improvement than that crafted by the baselines.
Further, as the optimization algorithm for textual attack, LS is also less affected by adversarial training compared to the baselines.

\section{Conclusion and Discussion}
\label{sec:conclusion}
In this paper, we propose a local search algorithm to solve the combinatorial optimization problem induced in the optimization step of word-level textual attack.
We prove its approximation bound, which is also the first general performance guarantee in the literature.
We conduct exhaustive experiments to demonstrate the superiority of our algorithm in terms of attack success rate, query efficiency, adversarial example quality, transferability and robustness improvement to victim models by adversarial training.

The algorithm proposed in this paper can be integrated into a wide range of textual attack frameworks, to help improve their query efficiency and success rates.
As a result, our algorithm can help better find the vulnerability, unfairness and safety issues in NLP models, and therefore help mitigate them in real-world applications.
On the other hand, there exists potential risk that the textual attack approaches can be used purposefully to attack the NLP models deployed in critical applications.
It is therefore necessary to study how to defend against such attacks.
In addition, in industry, this work also has an impact on solving the set maximization problem with partition matroids, which typically finds many applications such as service composition in cloud computing \cite{LiuWTQY15} and social welfare in combinatorial auctions \cite{Friedrich2019}.

There are several important future directions.
The first is to further improve the query efficiency of the algorithm, especially in cases of attacking long sentences.
The second is to extend the algorithm to the \textit{decision-based} setting where only the top labels predicted by the victim model are available.
The third future direction is to extend the algorithm to other languages, e.g., Chinese.
Although the general idea of word substitution as combinatorial optimization is widely applicable to different languages, it is still necessary to customize the method when applied to a specific language.
For example, in Chinese texts the words are not separated from each other; it is therefore necessary to conduct word segmentation before word substitution.
Another interesting future work is to improve the performance of existing search algorithms (e.g., PSO and GA), by integrating 
the proposed algorithmic components such as the three types of perturbations and the termination condition into them, such that they will be assured to find local optimal solutions.
Finally, it is also interesting to investigate how to automatically build an ensemble of attacks \cite{LiuTL020,LiuT019,LiuT020,TangLYY21} to reliably evaluate the adversarial robustness of NLP models.

\section*{Acknowledgment}
This work was supported in part by the Guangdong Provincial Key Laboratory under Grant 2020B121201001;
in part by the Program for Guangdong Introducing Innovative and Entrepreneurial Teams under Grant 2017ZT07X386;
in part by the Science and Technology Commission of Shanghai Municipality under Grant 19511120600; in part by the National Leading Youth Talent Support Program of China; and in part by the MOE University Scientific-Technological Innovation Plan Program.

\bibliographystyle{IEEEtran}
\bibliography{IEEEabrv,mybib}

\appendices
\begin{table*}[tbp]
	\centering
	\caption{Evaluation results in terms of modification rate (Modi.), grammaticality (IPW) and fluency (PPL), in the scenarios introduced by Alzantot \textit{et al.}~\cite{AlzantotSEHSC18}. Note for all these three metrics, the smaller the better.}
	\scalebox{1.0}{
		\begin{tabular}{cccccccc}
			\toprule
			\multirow{2}[4]{*}{Victim Model} & \multirow{2}[4]{*}{Alg.} & \multicolumn{3}{c}{IMDB} & \multicolumn{3}{c}{SNLI} \\
			\cmidrule(lr){3-5} \cmidrule(lr){6-8}          &       & Modi. (\%) & IPW ($\mathrm{E}$-04) & PPL   & Modi. (\%) & IPW ($\mathrm{E}$-04) & PPL \\
			\midrule
			\multirow{2}[2]{*}{LSTM} & GA    & 8.00     & 2.505 & 169.19 & \multicolumn{3}{c}{\multirow{2}[2]{*}{-}} \\
			& LS    & \textbf{5.02} & \textbf{0.969} & \textbf{145.54} & \multicolumn{3}{c}{} \\
			\midrule
			\multirow{2}[2]{*}{DNN} & GA    & \multicolumn{3}{c}{\multirow{2}[2]{*}{-}} & \textbf{14.19} & \textbf{6.863} & 257.50 \\
			& LS    & \multicolumn{3}{c}{}  & 14.20  & 8.470  & \textbf{251.93} \\
			\bottomrule
	\end{tabular}}
	\label{tab:quality_GA}
\end{table*}
\begin{table*}[tbp]
	\centering
	\caption{Evaluation results in terms of modification rate (Modi.), grammaticality (IPW) and fluency (PPL), in the scenarios introduced by Ren \textit{et al.}~\cite{RenDHC19}.}
	\scalebox{1.0}{
		\begin{tabular}{cccccccc}
			\toprule
			\multirow{2}[4]{*}{Victim Model} & \multirow{2}[4]{*}{Alg.} & \multicolumn{3}{c}{IMDB} & \multicolumn{3}{c}{AG's News} \\
			\cmidrule(lr){3-5} \cmidrule(lr){6-8}          &       & Modi. (\%) & IPW ($\mathrm{E}$-04) & PPL   & Modi. (\%) & IPW ($\mathrm{E}$-04) & PPL \\
			\midrule
			\multirow{2}[2]{*}{BiLSTM} & SBGS  & 6.41  & 2.524 & 96.54 & \multicolumn{3}{c}{\multirow{2}[2]{*}{-}} \\
			& LS    & \textbf{5.99} & \textbf{2.399} & \textbf{94.50} & \multicolumn{3}{c}{} \\
			\midrule
			\multirow{2}[2]{*}{Word-CNN} & SBGS  & 5.93  & 1.351 & 92.38 & 5.43  & 1.429 & 141.99 \\
			& LS    & \textbf{5.26} & \textbf{0.555} & \textbf{89.23} & \textbf{5.32} & \textbf{0.564} & \textbf{140.98} \\
			\midrule
			\multirow{2}[2]{*}{Char-CNN} & SBGS  & \multicolumn{3}{c}{\multirow{2}[2]{*}{-}} & 6.14  & 1.633 & 151.17 \\
			& LS    & \multicolumn{3}{c}{}  & \textbf{5.60} & \textbf{0.787} & \textbf{147.23} \\
			\bottomrule
	\end{tabular}}
	\label{tab:quality_pwws}
\end{table*}
\begin{table*}[tbp]
	\centering
	\caption{Evaluation results in terms of modification rate (Modi.), grammaticality (IPW) and fluency (PPL), in the scenarios introduced by Jin \textit{et al.}~\cite{JinJZS20}.}
	\begin{tabular}{ccccccccccc}
		\toprule
		\multirow{2}[4]{*}{Dataset} & \multirow{2}[4]{*}{Alg.} & \multicolumn{3}{c}{BERT} & \multicolumn{3}{c}{Word-LSTM} & \multicolumn{3}{c}{Word-CNN} \\
		\cmidrule(lr){3-5} \cmidrule(lr){6-8} \cmidrule(lr){9-11}          &       & Modi. (\%) & IPW (E-2) & PPL   & Modi. (\%) & IPW (E-2) & PPL   & Modi. (\%) & IPW (E-2) & PPL \\
		\midrule
		\multirow{2}[2]{*}{MR} & IBGS & 9.99  & 0.786 & \textbf{407.9} & 10.63 & 0.911 & 409.8 & 10.57 & 0.904 & \textbf{468.8} \\
		& LS    & \textbf{8.63} & \textbf{0.490} & 533.3 & \textbf{8.78} & \textbf{0.591} & \textbf{400.1} & \textbf{8.84} & \textbf{0.874} & 476.5 \\
		\toprule
		\multirow{2}[4]{*}{Dataset} & \multirow{2}[4]{*}{Alg.} & \multicolumn{3}{c}{BERT} & \multicolumn{3}{c}{Infersent} & \multicolumn{3}{c}{ESIM} \\
		\cmidrule(lr){3-5} \cmidrule(lr){6-8} \cmidrule(lr){9-11}          &       & Modi. (\%) & IPW (E-2) & PPL   & Modi. (\%) & IPW (E-2) & PPL   & Modi. (\%) & IPW (E-2) & PPL \\
		\midrule
		\multirow{2}[2]{*}{SNLI} & IBGS & 11.05 & 2.402 & 329.7 & 11.25 & \textbf{2.611} & \textbf{310.9} & 11.22 & 2.357 & 328.1 \\
		& LS    & \textbf{9.78} & \textbf{2.196} & \textbf{299.9} & \textbf{9.66} & 3.281 & 316.2 & \textbf{10.21} & \textbf{1.803} & \textbf{326.7} \\
		\bottomrule
	\end{tabular}%
	\label{tab:quality_textfooler}%
\end{table*}%
\begin{table*}[tbp]
	\centering
	\caption{Success rates and average query number of LS and the Greedy Algorithm in attack scenarios introduced by Zang \textit{et al.}~\cite{ZangQYLZLS20}.
	Each table cell contains two values, a success rate (\%) and an average query number, separated by ``\textbar''.
	Note for success rate, the higher
	the better; for query number, the smaller the better.}
	\scalebox{1.0}{
		\begin{tabular}{ccccccc}
			\toprule
			\multirow{2}[4]{*}{Alg.} & \multicolumn{2}{c}{IMDB} & \multicolumn{2}{c}{SST-2} & \multicolumn{2}{c}{SNLI} \\
			\cmidrule(lr){2-3} \cmidrule(lr){4-5} \cmidrule(lr){6-7}   & BiLSTM & BERT  & BiLSTM & BERT  & BiLSTM & BERT \\
			\midrule
			LS    & \textbf{99.93}\textbar2220 & \textbf{94.16}\textbar4332 & \textbf{94.07}\textbar295 & \textbf{88.90}\textbar343 & \textbf{74.71}\textbar246 & \textbf{76.43}\textbar233 \\
			
			Greedy & 99.83\textbar\textbf{2103} & 91.21\textbar\textbf{3967} & 93.56\textbar\textbf{257} & 88.65\textbar\textbf{266} & 74.48\textbar\textbf{173} & 75.78\textbar\textbf{174} \\
			\bottomrule
	\end{tabular}}
	\label{tab:success_rate_query_number_greedy}
\end{table*}

\section{Complete Evaluation Results of Adversarial Example Quality}
Table~\ref{tab:quality_GA}, Table~\ref{tab:quality_pwws} and Table~\ref{tab:quality_textfooler} list the results in terms of modification rate (Modi.), grammaticality (IPW) and fluency (PPL), in the scenarios introduced by Alzantot \textit{et al.}~\cite{AlzantotSEHSC18}, Ren \textit{et al.}~\cite{RenDHC19} and Jin \textit{et al.}~\cite{RenDHC19}, respectively.
It can be observed that in most scenarios the adversarial examples crafted by LS are better than that crafted by the baselines.
Note the quality of the adversarial examples crafted in the scenarios of Tan \textit{et al.}~\cite{TanJKS20} is not evaluated.
The reason is that it is aimed at revealing the discrimination of DNNs that are trained on perfect Standard English corpora, against minorities from nonstandard linguistic backgrounds.
Under its setting the adversarial examples crafted by Inflectional Morphology will naturally introduce quality issues, such as grammatical error and typos.
This is actually very different from the other three approaches which try to craft indistinguishable adversarial examples with correct grammaticality.

\section{Attack Performance of the Greedy Algorithm}
Table~\ref{tab:success_rate_query_number_greedy} presents the attack performance of LS and the Greedy algorithm in the scenarios of Zang \textit{et al.}~\cite{ZangQYLZLS20}.
Compared to Greedy, generally LS achieves higher success rates, with a few additional queries.





\ifCLASSOPTIONcaptionsoff
  \newpage
\fi

\end{document}